\documentclass[
  aip,
  reprint,
]{revtex4-1}

\usepackage{graphicx}

\usepackage[caption=false]{subfig}

\usepackage{dcolumn}
\usepackage{bm}

\usepackage[T1]{fontenc}
\usepackage[utf8]{inputenc}
\usepackage{amsthm}

\usepackage{etoolbox}
\usepackage{mathtools}
\usepackage{array}
\usepackage{afterpage}
\usepackage{comment}

\usepackage{circuitikz}

\usepackage{algorithm}
\usepackage{algpseudocode}

\usepackage{expl3} 
\ExplSyntaxOn
\int_new:N \g_wc_total_int
\cs_new_protected:Npn \wc_reset: { \int_gset:Nn \g_wc_total_int {0} }
\cs_new_protected:Npn \wc_add:n #1 { \int_gadd:Nn \g_wc_total_int {#1} }
\cs_new:Npn \wc_total: { \int_use:N \g_wc_total_int }
\ExplSyntaxOff

\usepackage[colorlinks,linkcolor=blue]{hyperref}

\usepackage{amsmath,amssymb}
\usepackage{amsthm}        
\usepackage{mathtools}     
\usepackage{newtxtext,newtxmath}

\usepackage{bm}            
\usepackage{xcolor}        
\usepackage[normalem]{ulem}

\newtheorem{theorem}{Theorem}[section]

\newtheorem{proposition}[theorem]{Proposition}

\theoremstyle{definition}

\theoremstyle{remark}

\DeclareMathOperator{\diag}{diag}

\DeclarePairedDelimiter{\norm}{\lVert}{\rVert}

\newcommand{\mrN}{\mathrm{N}}
\newcommand{\mrE}{\mathrm{E}}
\newcommand{\mrEi}{\mathrm{E_i}}
\newcommand{\mrEo}{\mathrm{E_o}}
\newcommand{\mrC}{\mathrm{C}}

\newcommand{\E}{\mathbb{E}}

\newcommand{\bs}{\bm{s}}
\newcommand{\bi}{\bm{i}}
\newcommand{\bv}{\bm{v}}
\newcommand{\br}{\bm{r}}
\newcommand{\by}{\bm{y}}
\newcommand{\bz}{\bm{z}}

\newcommand{\bx}{\bm{x}}

\newcommand{\x}{\mathbf{x}}
\newcommand{\y}{\mathbf{y}}

\newcommand{\Sinput}{\bm{s}^{\text{(input)}}}
\newcommand{\Soutput}{\bm{s}^{\text{(output)}}}
\newcommand{\Shidden}{\bm{s}^{\text{(hidden)}}}

\newcommand{\Vinput}{\bm{v}^{\text{(input)}}}
\newcommand{\Voutput}{\bm{v}^{\text{(output)}}}
\newcommand{\Vhidden}{\bm{v}^{\text{(hidden)}}}

\newcommand{\sF}{\bm{s}_F}
\newcommand{\sC}{\bm{s}_C}
\newcommand{\vF}{\bm{v}_F}

\newcommand{\iF}{\bm{i}_F}
\newcommand{\iC}{\bm{i}_C}

\newcommand{\bhy}{\hat{\bm{y}}}
\newcommand{\bhv}{\hat{\bm{v}}}

\makeatletter
\def\@email#1#2{%
 \endgroup
 \patchcmd{\titleblock@produce}
  {\frontmatter@RRAPformat}
  {\frontmatter@RRAPformat{\produce@RRAP{*#1\href{mailto:#2}{#2}}}\frontmatter@RRAPformat}
  {}{}
}%
\makeatother

\begin{document}


\title{How to Train Your Resistive Network: Generalized Equilibrium Propagation and Analytical Learning}
\author{J. Lin}
\altaffiliation{These authors contributed equally to this work.}

\affiliation{Ming Hsieh Department of Electrical and Computer Engineering, University of Southern California, Los Angeles, CA, USA}
\affiliation{Center for Nonlinear Studies, Los Alamos National Laboratory, Los Alamos, New Mexico 87545, USA}
\affiliation{Theoretical Division (T4), Los Alamos National Laboratory, Los Alamos, New Mexico 87545, USA}

\author{A. Desai}
\altaffiliation{These authors contributed equally to this work.}
\affiliation{Center for Nonlinear Studies, Los Alamos National Laboratory, Los Alamos, New Mexico 87545, USA}
\affiliation{CAI-3, Los Alamos National Laboratory, Los Alamos, New Mexico 87545, USA}
\affiliation{ 
University of California, Berkeley, Berkeley, CA, 94720, USA
}
\author{F. Barrows}
\affiliation{Center for Nonlinear Studies, Los Alamos National Laboratory, Los Alamos, New Mexico 87545, USA}
\affiliation{Theoretical Division (T4), Los Alamos National Laboratory, Los Alamos, New Mexico 87545, USA}

 \email{fbarrows@lanl.gov}

\author{F. Caravelli}
\affiliation{Theoretical Division (T4), Los Alamos National Laboratory, Los Alamos, New Mexico 87545, USA}
\affiliation{Department of Physics, University of Pisa, Largo Bruno Pontecorvo 3, 56127 Pisa, Italy}

%

%

\email{caravelli@lanl.gov}


\begin{abstract}
    Machine learning is a powerful method of extracting meaning from data; unfortunately, current digital hardware is extremely energy-intensive. There is interest in an alternative analog computing implementation that could match the performance of traditional machine learning while being significantly more energy-efficient. However, it remains unclear how to train such analog computing systems while adhering to locality constraints imposed by the physical (as opposed to digital) nature of these systems. Local learning algorithms such as Equilibrium Propagation and Coupled Learning have been proposed to address this issue. In this paper, we develop an algorithm to exactly calculate gradients using a graph theoretic and analytical framework for Kirchhoff's laws. We also introduce Generalized Equilibrium Propagation, a framework encompassing a broad class of Hebbian learning algorithms, including Coupled Learning and Equilibrium Propagation, and show how our algorithm compares. We demonstrate our algorithm using numerical simulations and show that we can train resistor networks without the need for a replica or control over all edges.  
\end{abstract}

\maketitle

\section{Introduction}

Modern machine learning achieves impressive accuracy, but its energy cost is increasingly dominated by data movement
rather than arithmetic, motivating interest in \emph{physical} substrates that perform inference \emph{in situ} by
relaxing to steady states~\cite{Bourzac2024,Markovi2020,Jaeger_2023,Kaspar2021}. Resistive and in-memory electrical networks are especially appealing in
this context because they naturally implement low-power linear operations~\cite{SebastianEtAl2020InMemoryReview,Xia2019memristive,Yang2012, barrows_AID_2024, caravelli_review_2025}. 

A central obstacle to training physical systems are locality constraints: hardware exposes only local voltages and currents,
whereas standard gradient-based learning assumes access to global error signals. Two-phase learning rules, most
prominently Equilibrium Propagation (EP)~\cite{ScellierBengio2017EP,kendall_arxiv_2020}, address this mismatch by running two nearby
steady-state experiments under the same inputs in both a \emph{free} phase and a \emph{nudged} (weakly clamped) phase. A local
update is then formed from differences between the two equilibria, which in resistive circuits often reduces to a
``difference-of-squares'' rule. Related approaches, including Coupled Learning (CL)~\cite{RocksEtAlPRX2021SupervisedPhysicalNetworks,Stern_APLML_2024,Dillavou_PhysRevApplied_2022,Dillavou_PNAS_2024}, implement the
second phase by directly clamping the outputs rather than explicitly modifying the energy~\cite{Xie_NeurComp_2003,lecun2006tutorial}. While attractive, these
schemes require controlled nudging hardware and suffer from systematic estimation bias due to finite nudges ~\cite{LaborieuxEtAlScalingEP}. Some implementations further rely on replica (``twin'') networks for contrastive
readout ~\cite{Movellan_1991_contrastive,Hinton2002CD}.

In this letter we focus on the simplest setting of \emph{linear, memoryless} (but tunable) resistor networks: a clean
testbed for theory and co-design. Because the circuit admits a closed-form linear response, we show how to bypass
nudging entirely and compute \emph{exact} gradients with respect to edge resistances, in a form implementable using
local measurements and a small number of voltage- and current-mode circuit evaluations. To place this exact
``projector-based'' update in context, we also introduce \emph{Generalized Equilibrium Propagation} (GEP), a perturbative
viewpoint that unifies EP and CL by the order of their nudging perturbation, enabling a direct comparison between
two-phase estimators and the analytical circuit gradient. Our protocols use a single physical network (no replica is required) and
naturally accommodate partial actuation, sensing, and tunability across programmable-impedance platforms~\cite{Christensen_2022,Mehonic2020,WaserAono2007,Valov2011}.
In particular, they align with emerging ``learning machines'' viewpoints where hardware, dynamics, and learning rules are co-designed rather than abstracted away.
Section~\ref{sec:physical-tp} gives a minimal physical derivation of GEP via linear response theory. We then
specialize to resistor networks, introduce the response operator formulation, and present both contrastive two-phase and projector-based analytical-gradient training rules. Section~\ref{sec:physical-learning} discusses standard two-phase learning in electrical circuits and introduces the analytical (projector-based) gradient method, which is the main result of this manuscript.  Section~\ref{sec:applications} applies these learning algorithms to both the classification and regression settings, demonstrating the difference between the analytical method and a representative two phase method. Conclusions follow.

\section{Learning Rules for Physical Systems}\label{sec:physical-learning}
\subsection{Two-phase from linear response}
\label{sec:physical-tp}
Let us begin with a definition of  \emph{two-phase learning}. In this context, we mean any learning protocol that estimates a parameter update by comparing two nearby
steady states of the \emph{same} (or a replicated) physical system under the same inputs using: (i) a \emph{free} phase, in which the system relaxes naturally, and (ii) a weakly \emph{nudged} (or \emph{clamped}) phase, in which a small external ``training
field'' is applied to encode the target. In this sense, as we show here, two-phase rules are concrete realizations of
\emph{linear-response} ideas from nonequilibrium statistical mechanics: the parameter update is inferred from how the
steady state shifts under a weak perturbation~\cite{CallenWelton1951FDT,Kubo1957IrreversibleProcessesI,Onsager1931ReciprocalI}.

These two-phase learning rules can be understood through the analytical framework provided by GEP. The full derivation of GEP is provided in Appendices \ref{app:EPder} and \ref{app:gep}, and we provide a brief overview here. We consider an input-clamped physical system with state $y\in\mathbb{R}^n$ and energy (or free energy at finite temperature)
$E_\theta(y)$ depending on tunable parameters $\theta$. A broad class of dissipative dynamics can be written as a
gradient flow
\begin{equation}
\dot y=-\Gamma \nabla_y E_\theta(y),\qquad \Gamma\succ 0,
\label{eq:gradflow-main-short}
\end{equation}
so relaxation decreases $E_\theta$ and converges to a stable equilibrium $y^0(\theta)$ with $\nabla_y E_\theta(y^0)=0$
(see, e.g., standard references on gradient flows~\cite{AmbrosioGigliSavare2008GradientFlows}). This \emph{free phase}
is simply the steady state reached under the imposed inputs.

To incorporate targets, we apply a small \emph{training field} (a weak clamp on outputs) to induce a nudged energy:
\begin{eqnarray}
F(\beta, \theta, y)&=&E_\theta(y)+\beta C(\theta, y)
\label{eq:nudged-energy-main-short}
\end{eqnarray}
for an effective cost function $C$. We let $y^\beta(\theta)$ be the resulting \emph{nudged/clamped phase} equilibrium,
$\nabla_y F(\beta, \theta ,y^\beta)=0$. In the linear-response regime $\beta\to 0$, the shift $y^\beta-y^0$ is $O(\beta)$
and encodes the susceptibility of the steady state to the applied field~\cite{Kubo1957IrreversibleProcessesI,CallenWelton1951FDT}.
Defining the objective function as $J(\theta):=C(y^0(\theta))$, one obtains the Equilibrium Propagation identity~\cite{ScellierBengio2017EP}
\begin{equation}
\partial_\theta J(\theta)
=
\lim_{\beta\to 0}\frac{1}{\beta}
\Big[\partial_\theta F(\beta, \theta, y^\beta)-\partial_\theta F(0, \theta, y^0)\Big].
\label{eq:ep-identity-main}
\end{equation}

Equation~\eqref{eq:ep-identity-main} makes explicit why two-phase learning is ``physical'': the gradient is recovered by
comparing two relaxation experiments, free evolution to $y^0$ and weakly clamped evolution to $y^\beta$, and forming
a local difference of energy derivatives evaluated at the two steady states~\cite{ScellierBengio2017EP}.

Equilibrium Propagation (EP) and Coupled Learning (CL) are both two-phase methods: they compare a free steady state to a
nearby nudged steady state and extract an update from their difference~\cite{ScellierBengio2017EP,RocksEtAlPRX2021SupervisedPhysicalNetworks,Stern_2024}. Mathematically, both EP and CL have nudged energies of the form
\begin{align}
    F(\beta,\theta,y)=E_\theta(y)+n(\beta,\theta,y)
    \label{eq:GEP_nudged_energy}
\end{align}
The primary difference between the two learning rules is that EP uses an explicit \emph{linear} energy nudge, while CL nudges by \emph{output clamping} (a state shift inside the same energy). In other words, the key distinction is the \emph{perturbative order} of the nudging perturbation at the free equilibrium point $y^0$:
\begin{eqnarray}
n_{EP}(\beta,\theta,y^0)&=&\beta\,C(\theta,y^0)=O(\beta)
\\
n_{CL}(\beta,\theta,y^0)&=&E_\theta\!\big(y^0 + \epsilon \big)-E_\theta(y^0)=O(\beta^2)
\end{eqnarray}

where $\epsilon$ is an $O(\beta)$ error vector (see Appendix \ref{app:epcl} for details). In EP, the nudged energy differs from the free energy by an $O(\beta)$ term at $y^0$. In CL the leading change is $O(\beta^2)$.~\cite{RocksEtAlPRX2021SupervisedPhysicalNetworks,Stern_2024}.

We can capture both cases within a single perturbative framework by assuming that that $n(\beta,\theta,y)=O(\beta^k)$ to leading order for some integer $k\ge 1$. Similarly, we assume that the free and nudged equilibria $y^0(\theta)$ and $y^\beta(\theta)$ satisfy $\|y^\beta-y^0\|=O(\beta^k)$. If we have a generalized objective function of the form
\begin{align}
    \ell(\theta)=\frac{1}{k!}\partial_\beta^k n(0,\theta,y^0),
\end{align}
the GEP identity states that the leading-order two-phase finite difference in parameter derivatives is proportional to $\partial_{\theta} \ell(\theta)$:
\begin{equation}
\lim_{\beta\to 0}\frac{1}{\beta^k}
\Big[\partial_\theta F(\beta,\theta,y^\beta)-\partial_\theta F(0,\theta,y^0)\Big]
=
\partial_{\theta} \ell(\theta),
\label{eq:gep_identity}
\end{equation}
This identity is rigorously derived in Appendix \ref{app:gep}. Note that EP emerges in the case when $k=1$ (linear nudge) ~\cite{ScellierBengio2017EP} while CL corresponds to $k=2$ (quadratic
nudge)~\cite{RocksEtAlPRX2021SupervisedPhysicalNetworks,Stern_2024}, placing their two-phase estimators on a common
footing for comparison with the exact resistor-network gradients developed next.

\subsection{Learning with Electrical Circuits}
\label{sec:circuits}

We now specialize to \emph{passive linear resistor networks}, where the steady state admits an explicit linear
input-output map~\cite{guillemin1953introductory,Chung1996,Zegarac2019,Lin2025}. For full generality, we assume the circuit to be associated with a connected 
graph $\mathcal{G} = (\mathcal{V}, \mathcal{E})$ with $\mathrm N = |\mathcal{V}|$ nodes and $\mrE = |\mathcal{E}|$ edges ($\mrEi$ input edges and $\mrEo$ output edges); each edge $e$ contains an ideal voltage source $s_e$ in series with a resistor of resistance $r_e>0$ (conductance $g_e=r_e^{-1}$). We collect edgewise quantities into vectors
$\bs,\bi,\bv\in\mathbb{R}^\mrE$, where $\bs$ are imposed source voltages, $\bi$ are steady-state edge currents, and $\bv$
are Ohmic voltage drops across resistors (so $v_e=r_e i_e$). Let
\begin{equation}
G=\diag(g_e)\succ 0,
\qquad
R=\diag(r_e)=G^{-1}.
\end{equation}

To obtain a closed-form circuit response, we use a standard cycle-space formulation of Kirchhoff constraints~\cite{guillemin1953introductory,Chung1996,Caravelli_2017b,Caravelli_2021,Barrows2024}
and fix a circuit orientation. Let $A\in\mathbb{R}^{\mathrm \mrC \times \mrE}$ be a cycle matrix spanning the
$\mrC = \mrE- \mrN + 1$ fundamental cycles. Solving Kirchhoff constraints together with dissipation minimization (Thomson/Dirichlet
principles~\cite{doyle_arxiv_2000,barrows_arxiv-PEDS-2025}) yields a linear edge-space map (derivation in Appendix~\ref{app:projectors})
\begin{equation}
\bv=-\Omega_{A/R}\bs,
\qquad
\Omega_{A/R}:=R\,A^\top(AR A^\top)^{-1}A,
\label{eq:omega_voltage}
\end{equation}
which can be interpreted as a (weighted) cycle-space projector written in edge variables~\cite{Caravelli2017_loc,Zegarac2019,Lin2025}. 
We identify the $R$-weighted cycle-space projector $\Omega_{A/R}$ as the central learning operator in passive resistor networks and derive an exact, physically interpretable gradient for training edge conductances. 
Currents follow from Ohm's law,
\begin{equation}
\bi=R^{-1}\bv=-R^{-1}\Omega_{A/R}\bs.
\label{eq:omega_current}
\end{equation}

Operationally, a voltage-mode experiment implements $\bs\mapsto\bv$ (i.e., left multiplication by $-\Omega_{A/R}$),
while reciprocal/current-mode manipulations provide access to $\Omega_{A/R}^\top$ via standard reciprocity properties
of passive linear networks~\cite{guillemin1953introductory,Zegarac2019}. This transpose-access primitive is the key
ingredient exploited by the projector-based learning rule below, connecting circuit measurements to two-phase
gradient estimators in energy-based learning. 

\begin{figure*}[ht]
    \centering
    \includegraphics[width=\textwidth]{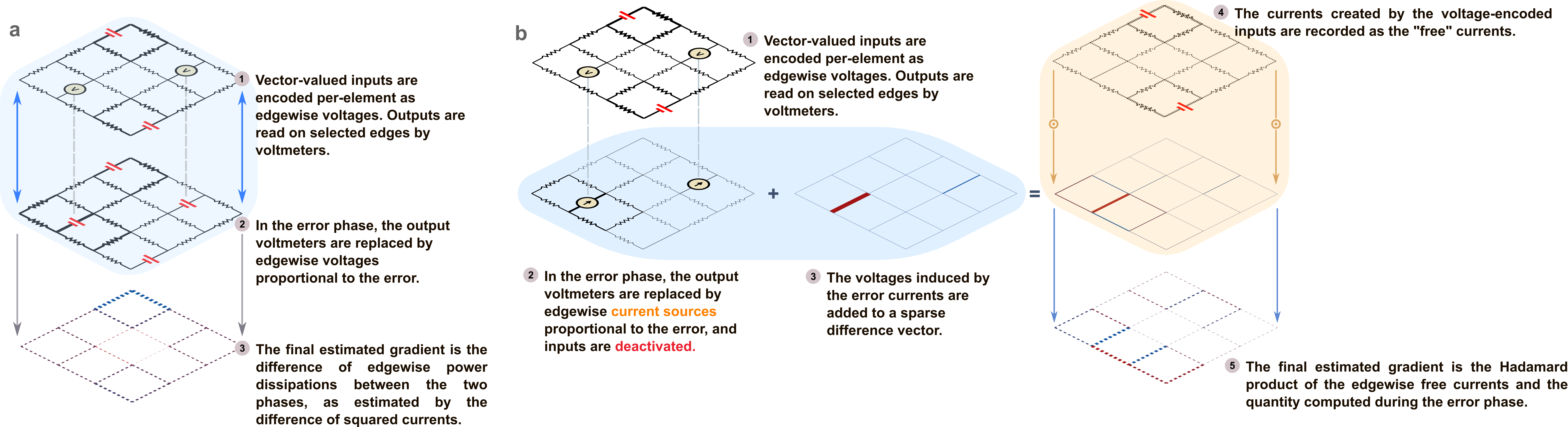}
    \caption{A schematic comparison between (a) two-phase gradient estimation and (b) projector-based gradient
    estimation. The two-phase estimate uses purely voltage-mode biasing and measurement to obtain a gradient proxy,
    while the projector-based estimator mixes voltage- and current-mode manipulations. In two-phase learning, both
    input and error signals are encoded as voltage sources; in projector-based learning, error signals are encoded via
    current sources to access an adjoint projector action.}
    \label{fig:both_schemes}
\end{figure*}

\subsection{Two learning paradigms: contrastive two-phase vs.\ projector gradients}
\label{sec:two_paradigms}

\textit{Vanilla two-phase learning.} {We now consider our \textit{vanilla} two-phase learning algorithm (EP-style).}
Let $P_o\in\{0,1\}^{\mrE\times \mrEo}$ be a linear operator selecting the $\mathrm {E_o}$ output edges and define $\by \in\mathbb{R}^{\mathrm{E_o}}$ to be the desired target readout.

Equilibrium Propagation introduces a weak output clamp by augmenting the (dissipation) energy with
$\frac{\beta}{2}\|P_o^\top \bv-\by\|^2$ and compares a free equilibrium to a nudged equilibrium.
The resulting local two-phase update is a difference of squares; in resistance form,
\begin{equation}
\frac{\partial J}{\partial \br}
=
\lim_{\beta\to 0}\frac{1}{2\beta}\Big[\iC^{\odot 2}-\iF^{\odot 2}\Big],
\label{eq:ep_resistance_update}
\end{equation}
where $\bi_F$ and $\bi_C$ are the free- and clamped-phase steady-state currents.

\textit{$\Omega$-based (projector) learning.}
Because the circuit is linear, we can instead differentiate the \emph{closed-form} map
\eqref{eq:omega_voltage}--\eqref{eq:omega_current} and obtain an \emph{analytical} gradient with respect to $\br$.
Crucially, the resulting expressions can be implemented physically using a small number of voltage- and current-mode
experiments that realize $\Omega_{A/R}$ and $\Omega_{A/R}^\top$. This avoids finite-$\beta$ bias and does not require
engineering a nudged energy, in contrast to vanilla two-phase learning.


We view $\bv(\br;\bs)=-\Omega_{A/R}\bs$ as a function of $\br$ with $\bs$ held fixed. Differentiating
$\bv(\br;\bs)=-R A^\top(AR A^\top)^{-1}A\bs$ yields the Jacobian
\begin{align}
J_{\bv}(\br)
:=\frac{\partial \bv}{\partial \br}
&=
\diag(\bi)\;-\;\Omega_{A/R}\,\diag(\bi)
\nonumber\\
&=
(I-\Omega_{A/R})\,\diag(\bi),
\label{eq:jacobian_v_r}
\end{align}
where $\bi$ is the steady-state current vector \eqref{eq:omega_current}. For the least-squares loss
$\mathcal{L}(\bv,\by)=\frac12\|P_o \bv- \by\|^2$, the chain rule gives
\begin{equation}
\nabla_{\br}\mathcal{L}
=
J_{\bv}(\br)^\top P_o^{\top} (\bv-\by)
=
\diag(\bi)\,(I-\Omega_{A/R}^\top)\,P_o^{\top} (P_o \bv-\by).
\label{eq:proj_grad_ls}
\end{equation}
(Analogous expressions hold for other losses, e.g.\ hinge-style objectives; we return to classification variants in
Section~\ref{sec:classification}.)

With $J_v(r)=(I-\Omega_{A/R})\,\mathrm{diag}(i)$: (i) the circuit implements the linear map $v=-\Omega_{A/R}s$ and $i=-R^{-1}\Omega_{A/R}s$; (ii) a key structural fact is that $\Omega_{A/R}$ is an oblique projector: $\Omega_{A/R}^2=\Omega_{A/R}$; (iii) for least squares, the local first-order condition is $\mathrm{diag}(i)\,(I-\Omega_{A/R}^{\top})(v-v_T)=0$; (iv) the input–output voltages satisfy
\begin{align}
\|v_{\mathrm{(output)}}\|
\le \|s_{\mathrm{(input)}}\|_2\sqrt{R_{\max}/R_{\min}},
\end{align}
where $R_{\max}$ and $R_{\min}$ denote the largest and smallest diagonal entries of $R$, respectively. For proof, see Appendices \ref{app:projectors}-\ref{sec:bounds}.

In the linear-response limit, the two-phase “difference-of-squares” update converges to
\begin{align}
\lim_{\beta\to 0}\frac{1}{2\beta}\bigl(\iC^{\odot 2}-\iF^{\odot 2}\bigr)
= \diag(\iF)  R^{-1} \Omega_{A/R} P_o^{\top} (\by - P_o \vF)
\end{align} See Appendix \ref{app:twophaseelectric} for the full proof.

\begin{figure*}[ht!]
    \begin{minipage}{0.48\textwidth}
    \begin{algorithm}[H]
    \caption{Two-phase gradient estimation}
    \begin{algorithmic}[1]
        \State $S_F(\bx) \gets \gamma P_i\bx$.
        \State With voltage-mode biasing, compute $\bv = -\Omega_{A/R}S_F(\bx)$.
        \State \textbf{Measure} the ``free" edgewise currents, $\bi_F$.
        \State \textbf{Measure} output edges to compute $\bhy = P_o^{\top}\bv$.
        \State Compute the error vector $\epsilon = \beta(\bhy - \by)$.
        \State $S_C(\bx, \epsilon) \gets S_F(\bx) + P_o\epsilon$.
        \State Apply $S_C$: \textbf{Measure} the ``nudge" currents, $\bi_C$.
        \State Apply (\textit{estimated}) gradient $\Delta r = \bi_F^{\odot 2} - \bi_C^{\odot 2}$.
    \end{algorithmic}
    \end{algorithm}
    \end{minipage}
    \hfill
    \begin{minipage}{0.48\textwidth}
    \begin{algorithm}[H]
    \caption{$\Omega$-gradient estimation}
    \begin{algorithmic}[1]
        \State $S_F(\bx) \gets \gamma P_i\bx$.
        \State With voltage-mode biasing, compute $\bv = -\Omega_{A/R}S_F(\bx)$.
        \State Store the ``free" edgewise currents, $\bi_F$.
        \State \textbf{Measure} output edges to compute $\bhy = P_o \bv$.
        \State Compute the error vector $\epsilon = \eta(\bhy - \by)$.
        \State Compute $\bi_\epsilon = -\Omega_{A/R}^{\top}\epsilon$ via current-mode bias.
        \State Compute $\Delta = \epsilon + \bi_\epsilon$.
        \State Apply (\textit{analytical}) gradient $\nabla_r\mathcal{L}(\bhy,\by) = \bi_F \odot\,\Delta$.
    \end{algorithmic}
    \end{algorithm}
    \end{minipage}
    \caption{The two gradient estimation schemes discussed in this work used to modify the resistive states. Both methods train the mapping $\bhy = -P_o\Omega_{A/R}S_F(\bx)$ as computed by a given resistive mesh. On the left algorithm, multiple measurements are required to compute the estimated gradient. In the $\Omega$ gradient approach developed in this manuscript, only the output measurement is required.}
    \label{fig:schemes_code}
\end{figure*}

\subsection{Circuits as tunable input--output maps}
\label{sec:circuit_tunable_main}

To define inference and training experiments, we choose $i$ input edges to actuate and $o$ output edges to read out.
Let $P_i\in\{0,1\}^{\mrE \times \mrEi}$ and $P_o\in\{0,1\}^{\mrE \times \mrEo}$ be the corresponding selector matrices. In a free
(voltage-mode) inference step, an input $\bx\in\mathbb{R}^{\mrEi}$ is encoded as an edge-source pattern
\begin{equation}
\sF(\bx)=\gamma P_i^{\top}\bx,
\end{equation}
yielding the steady-state ohmic drops and readout
\begin{equation}
\bv_F=-\Omega_{A/R}\sF(\bx),
\qquad
\hat{\by}=P_o^\top \bv_F.
\end{equation}
Training adjusts $R$ so that $\hat{\by}\approx\by$. Vanilla two-phase learning introduces a second (clamped) experiment by adding
a small error-proportional source perturbation on the output edges; equivalently,
\begin{equation}
\sC(\bx,\by)=\sF(\bx)+P_o\,\beta(\hat{\by}-\by),
\end{equation}
so that free and clamped equilibria differ only on the output-edge sources. These two experiments lead to the
contrastive update \eqref{eq:ep_resistance_update}. In contrast, the projector gradient
\eqref{eq:proj_grad_ls} uses the same free inference quantities together with physical realizations of
$\Omega_{A/R}^\top$ to compute an analytical update without nudging. For further details on edge reordering, block
partitions, and equivalent source definitions, see Appendix~\ref{app:circuit_tunable}.

\begin{figure*}[ht!]
    \centering
    \includegraphics[width=\textwidth]{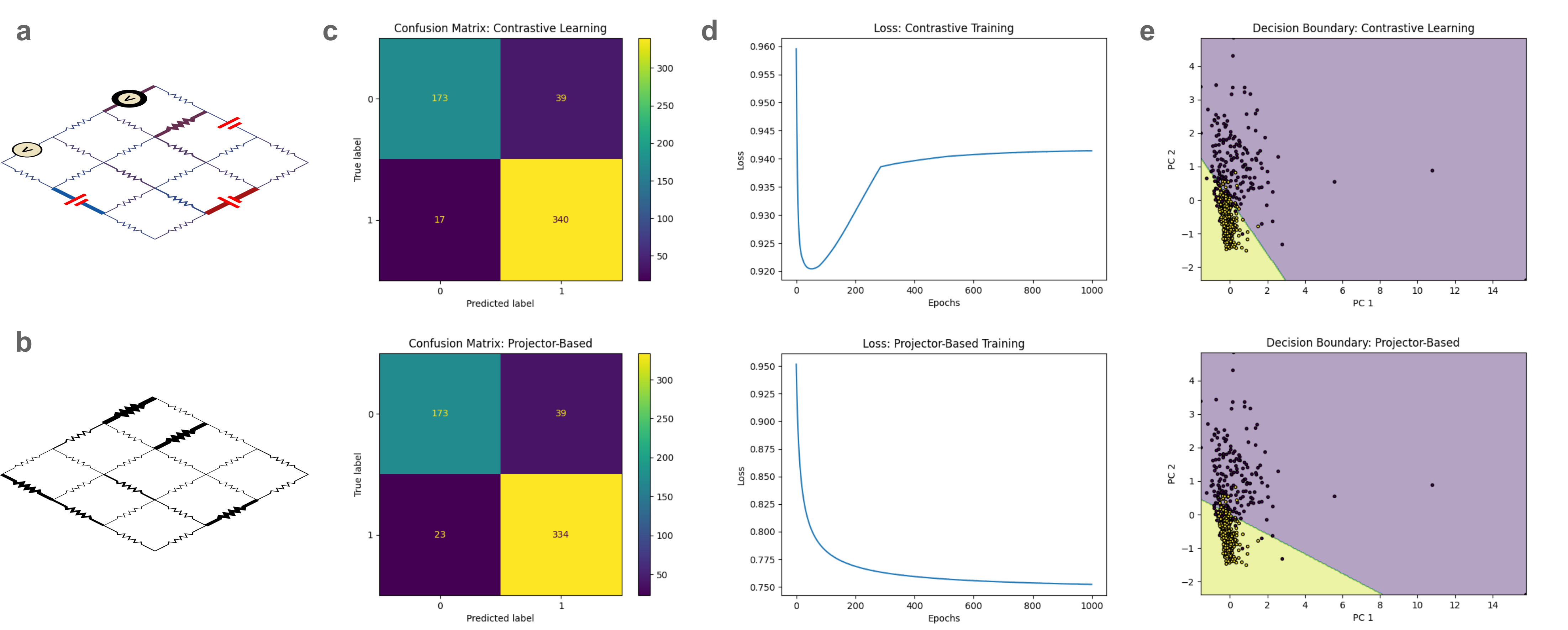}
    \caption{Training results on the Wisconsin breast cancer classification dataset: both methods train the mesh
    classifier to reach 90\% accuracy. (a) A schematic, with example voltages, of the circuit during inference: the
    30-dimensional data is projected to 3 dimensions using PCA. The output class is read as the maximum over two
    voltmeters. (b) The learned configuration of edge resistances. (c) Confusion matrices for each gradient method.
    (d) Loss curves for both methods. While comparable accuracy is achieved by both classifiers, training with
    contrastive learning displays greater instability. (e) The decision boundary learned by each classifier.}
    \label{fig:cancer_classifier}
\end{figure*}

In Figure \ref{fig:both_schemes} we present a schematic of a circuit implementation of two phase learning and the  projector based analytical gradient method.

\subsection{Learning Recipes}\label{sec:rec}

We summarize the two training protocols in Fig.~\ref{fig:schemes_code} using the same notation as above. Let
$P_i\in\{0,1\}^{E\times s}$ select the $\mrEi$ input edges and $P_o\in\{0,1\}^{\mrE \times \mrEo}$ select the $\mrEo$ output edges.
Given an input $\bx\in\mathbb{R}^{\mrEi}$, we encode it as an edge-source pattern
\begin{equation}
S_F(\bx)=\gamma P_i^{\top}\bx,
\end{equation}
where $\gamma$ is a fixed input gain. Applying $S_F$ in voltage mode produces the steady-state ohmic drops
$\bv=-\Omega_{A/R}S_F(\bx)$, and the model output is obtained by reading out the selected output edges,
\begin{equation}
\bhy=P_o \bv.
\end{equation}
Thus, a single ``inference'' experiment consists of imposing $S_F(\bx)$, then measuring $\bhy$.

Two-phase learning performs a gradient estimate by comparing this free experiment to a second, weakly clamped experiment.
After measuring $\bhy$, we form an error signal
\begin{equation}
\epsilon=\beta(\bhy-\by),
\end{equation}
and inject it back onto the output edges as an additional source, defining the clamped bias pattern
\begin{equation}
S_C(\bx,\by)=S_F(\bx)+P_o\epsilon .
\end{equation}
We then run the circuit a second time under $S_C$ and measure the resulting steady-state edge currents $\bi_C$; we also
store the free-phase currents $\bi_F$ from the first run. The contrastive (two-phase) update is local and edgewise:
each resistance is driven by the difference between the squared currents observed in the free and clamped phases,
\begin{equation}
\Delta\br \ = \bi_F^{\odot 2}-\bi_C^{\odot 2}
\end{equation}
In other words, the clamped experiment slightly perturbs the network toward the target, and the change in dissipated
power on each edge (proportional to $i_e^2 r_e$) provides the learning signal.

\begin{figure}[ht!]
  {\raggedleft
    (a)\includegraphics[width=0.99\linewidth]{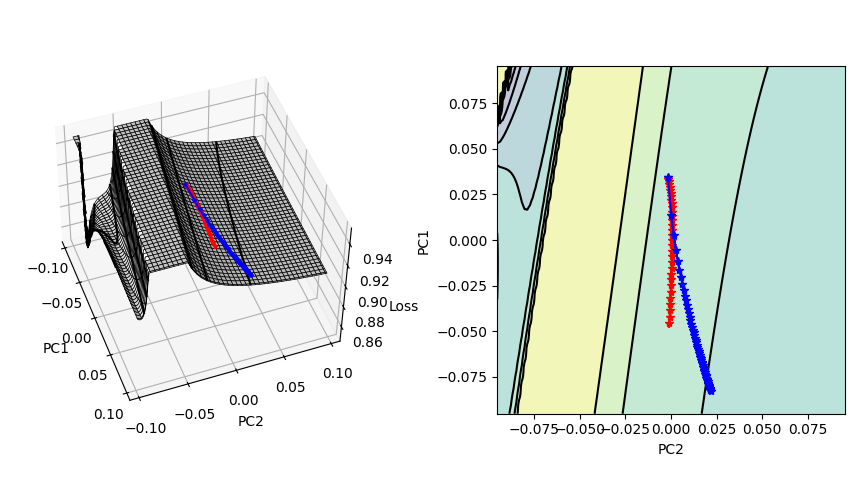}\par
    (b)\includegraphics[width=0.99\linewidth]{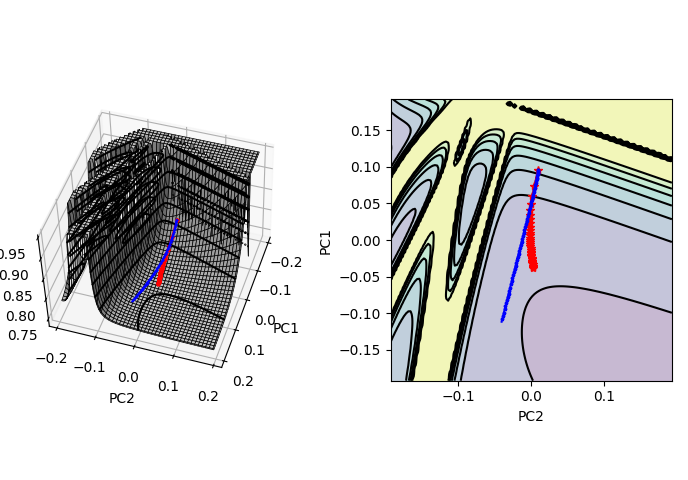}\par
  }
      \caption{A comparison between the loss landscapes ``seen'' by the (a) two-phase and (b) projector-based
    gradients on the Wisconsin breast cancer task. The red trajectory indicates the weight evolution under
    projector-based training, while the blue trajectory is for two-phase learning. In both cases, the trajectories
    are plotted such that the initial state is towards the top, and the final state is towards the bottom.}
  \label{fig:loss_landscapes}
\end{figure}

The $\Omega$-based method replaces the second voltage-mode (clamped) experiment by a single adjoint probe that
implements $\Omega_{A/R}^\top$. After the free inference run and the readout of $\bhy$, we again form a small error
vector
$\epsilon=\alpha(\bhy-\by)$,
but instead of adding $\epsilon$ as an output voltage source, we apply it in current mode to realize the adjoint action
$\bv_\epsilon=-\Omega_{A/R}^\top\epsilon$. We then compute
\begin{equation}
\Delta=\epsilon + \bv_\epsilon,
\end{equation}
and combine this with the stored free-phase currents to obtain an edgewise gradient estimate
\begin{align}
\nabla_{\br}\mathcal{L}(\bhy,\by) &= \diag(\mathbf{i}) (I - \Omega_{A/R}^{T}) (v - v_T)  \nonumber
\\
    &= \bi_F\odot \Delta.
\label{eq:ProjectorAnalyticalGradient_MainText}
\end{align}
Operationally, this scheme uses two physical primitives: a voltage-mode experiment to realize $-\Omega_{A/R}$ (the free
map $\bs\mapsto\bv$) and a reciprocal current-mode experiment to realize $-\Omega_{A/R}^\top$ (a voltage mode realization of $-\Omega_{A/R}^\top$ is described in Appendix \ref{app:circuit_tunable}). The resulting update is
still local as it multiplies a measured current on each edge by a locally computed $\Delta$ on that edge, but it avoids a
finite-$\beta$ clamping step and does not require measuring a second set of clamped currents.

In both cases, once a gradient estimate is available we update resistances by a (possibly constrained) step,
\begin{equation}
\br\leftarrow \mathrm{clip}\!\left(\br-\nabla_{\br}\mathcal{L},\,\br_{\min},\,\br_{\max}\right),
\end{equation}
where clipping is optional and enforces hardware bounds. When resistances are constrained to hardware bounds, the squared-error loss therefore attains a global minimizer; see Appendix~\ref{min_exists}.
 Empirically, we find that bounding $\br$ is often important for
stable two-phase training, whereas the $\Omega$-based updates are typically less sensitive.

\section{Example Tasks: Regression and Classification}
\label{sec:applications}

A passive resistive network implements a linear steady-state map from imposed edge sources to Ohmic drops,
\begin{equation}
\bv = -\Omega_{A/R}\,\bs,
\end{equation}
where $\Omega_{A/R}$ depends on the tunable resistances $R$. Choosing which edges are actuated and measured selects an
effective input--output block of this operator. With an input selector $P_i\in\{0,1\}^{E\times s}$ and output selector
$P_o\in\{0,1\}^{E\times t}$, encoding $\bx\in\mathbb{R}^{\mathrm{E_i}}$ as $\bs=\gamma P_i\bx$ yields the readout
\begin{equation}
\hat{\by} = f_R(\bx)
:= P_o^\top \bv
= -\gamma\,P_o^\top \Omega_{A/R} P_i\,\bx.
\label{eq:io_map_block}
\end{equation}
Training adjusts $R$ so that $f_R(\bx)$ matches a target mapping on a task distribution. Since $\Omega_{A/R}$ is a rank-$C$
projector ($C=E-N+1$), the realizable block satisfies the basic expressivity bound
$\mathrm{rank}(P_o^\top\Omega_{A/R}P_i)\le \min\{s,t,C\}$ (Appendix \ref{io_transformations}).



\subsection{Classification}
\label{sec:classification}

We next use the circuit as a binary classifier with a single readout edge. Given input $\bx$, the circuit produces an
edge-voltage pattern $\bv(\bx)$ and scalar score
\begin{equation}
f(\bx) := \mathbf{e}_o^\top \bv(\bx),
\end{equation}
with label $y\in\{-1,1\}$ predicted by $\mathrm{sign}(f(\bx))$. We train with hinge loss, see implementtion details in Appendix~\ref{app:classification_details}, so that only margin-violating
examples contribute updates; the corresponding circuit subgradient can be written directly in terms of steady-state
currents and $\Omega_{A/R}$ (derivation and implementation details in Appendix \ref{app:classification_details}).

We apply both gradient estimators to the Wisconsin breast cancer dataset after reducing the 30 features to 3 dimensions
via PCA. Figure~\ref{fig:cancer_classifier} summarizes training: both methods reach $\sim$90\% accuracy, while the
contrastive estimator shows more visible instability in the loss and learned boundary. Figure~\ref{fig:loss_landscapes}
visualizes the effective landscapes traversed by each method: the two trajectories agree early and then diverge as
resistances move away from the initial homogeneous regime.

\textbf{Performance under limited control}.
To model limited actuation, we freeze each edge resistance independently with probability $p_{\text{freeze}}$ at the
start of training, and apply updates only to the remaining edges. For each $p_{\text{freeze}}$ we repeat the experiment
over multiple random frozen subsets and report mean accuracy with error bars (Fig.~\ref{fig:pobs_cancer}). Specifically, we generate a masking vector $\bm{m}$ which is a Bernoulli random vector with success probability $p$. Then, for a given gradient, we instead apply the masked gradient $\bm{m} \odot \nabla_{\br}\mathcal{L}(\hat{\bm{y}},\bm{y})$.
Figure~\ref{fig:pobs_cancer} shows that, across random frozen subsets, the
projector-based estimator exhibits lower variance and more consistent performance improvement as access to edges increases, whereas
the two-phase estimator degrades more sharply under partial control. This implies that the control over all resistive states (e.g. updating the resistive value based on the gradient) is not strictly necessary.

\begin{figure}[ht!]
    \centering
    \includegraphics[width=0.99\linewidth]{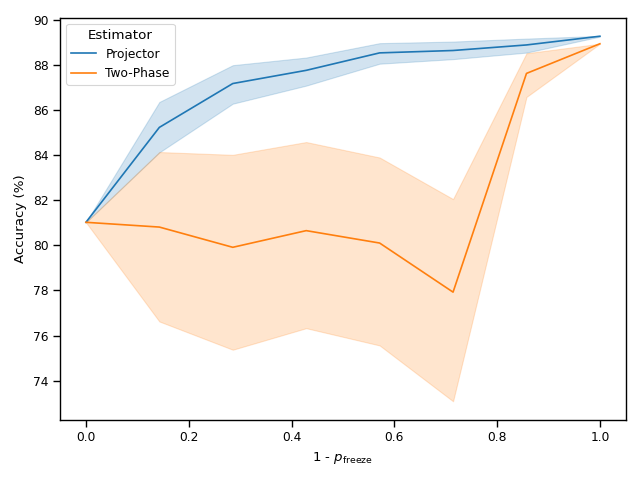}
    \caption{Accuracy on the Wisconsin dataset under varying degrees of circuit edge access for both gradient
    estimators. For each estimator and value of $p_{\text{freeze}}$, error bars and average accuracies are computed
    over 40 independent trials with randomly frozen edges (e.g. resistive values) for 1000 training steps; networks are always initialized
    from the everywhere homogeneous initial state.}
    \label{fig:pobs_cancer}
\end{figure}

\subsection{Noisy Regression with a disordered  network topology}
\label{sec:nanowire}
The discussion we have had so far has focused on a resistive network with a rectangular grid topology. To show that this method also works for random network topologies, we consider nanowire-inspired random resistive networks, which provide a topology that is both more irregular and more scalable. We use the nanowire network construction algorithm introduced in \cite{Zhu2021,Zhu2023} and based on the Monte Carlo deposition method of nanowires on a surface and checking for intersections. Specifically, we use the variant algorithm introduced in \cite{barrows2025ri} and described in App. \ref{app:random-nw-deposition}. 

We train on noisy linear
regression data
\begin{equation}
\by = M\bx + \varepsilon,
\end{equation}
with Gaussian noise $\varepsilon$  and $M\in\mathbb{R}^{2\times2}$ sampled uniformly entrywise.  
Under additive zero-mean noise, the analytical (projector) gradient remains unbiased while the two-phase current-squared estimator incurs both an $O(\beta)$ finite-nudge approximation error and a noise-induced statistical bias for any $\beta\neq 0$. For proof, as well as a detailed description of the task and setting, see Appendix~\ref{ref:StochGradient}.
\begin{figure}[ht!]
\centering

\subfloat[]{%
  \includegraphics[width=0.48\linewidth]{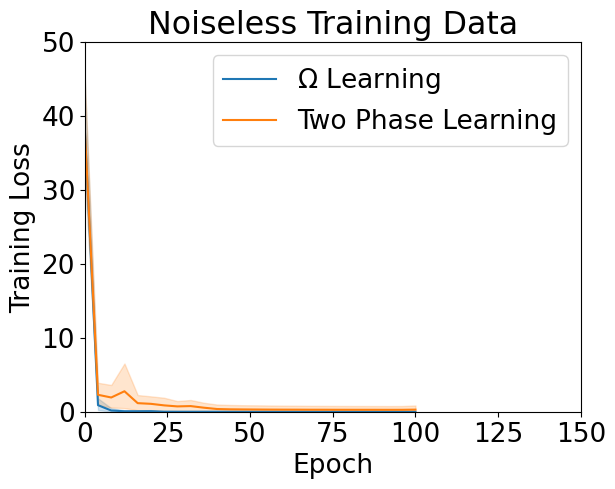}%
  \label{subfig:nanowire_losses_nl_0}%
}\hfill
\subfloat[]{%
  \includegraphics[width=0.48\linewidth]{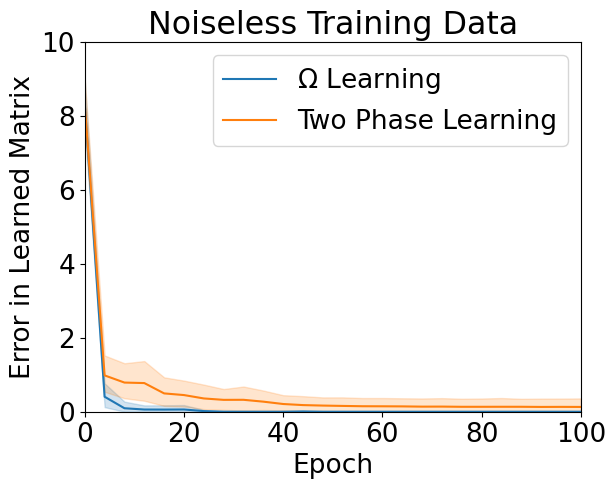}%
  \label{subfig:nanowire_matrix_errors_nl_0}%
}

\subfloat[]{%
  \includegraphics[width=0.48\linewidth]{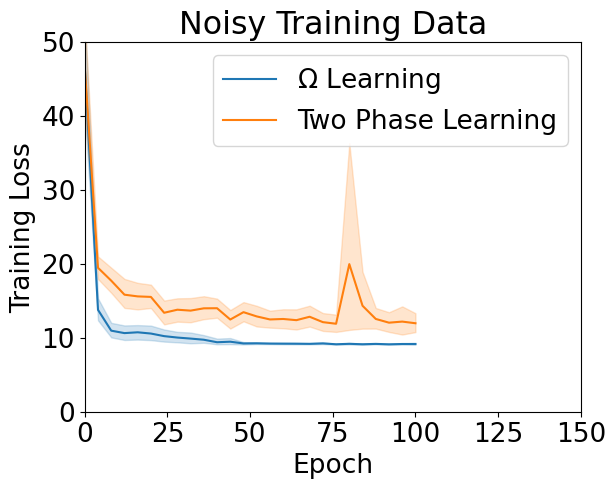}%
  \label{subfig:nanowire_losses_nl_3}%
}\hfill
\subfloat[]{%
  \includegraphics[width=0.48\linewidth]{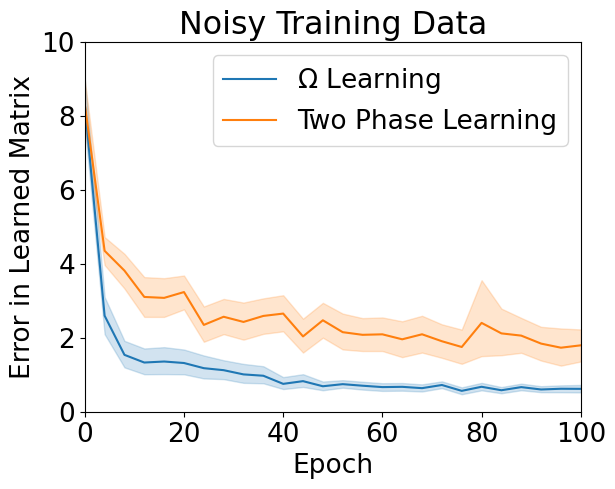}%
  \label{subfig:nanowire_matrix_errors_nl_3}%
}

\caption{Training 40 random resistive networks using both $\Omega$ learning and Two Phase Learning.
(a) and (c) Training losses for noiseless and noisy training data respectively.
(b) and (d) Error (Frobenius norm) between the learned input-output matrix and the true input-output matrix.}
\label{fig:nanowire_results}

\end{figure}

Figure
\ref{fig:nanowire_results} provides the loss averaged over 40  random nanowire-inspired networks for vanilla two-phase and  $\Omega$-based learning. We consider the regression task described above (both in the absence of and with noise in the training data). While the two methods are comparable in the noiseless regime, the results suggest that $\Omega$ learning provides an improvement in convergence speed and in obtaining a better fit. This matches the theoretical predictions presented in Appendix~\ref{ref:StochGradient}.


\section{Conclusion}
In this work, we reframed two-phase learning using linear-response theory, enabling EP- and CL-style updates to be compared to one another. We then specialized to passive linear
resistor networks; this viewpoint yields a compact operator description in terms of the response projector $\Omega_{A/R}$ and exposes what two-phase schemes approximate when $\beta$ is finite. We show that the $R$-weighted cycle-space projector $\Omega_{A/R}$ directly defines an exact gradient for learning in passive networks. Equilibrium propagation and coupled learning appear as approximate implementations of a similar projector-based learning rule.

Within this circuit setting we studied two training routes. The \emph{contrastive} rule is a genuine two-phase method,
producing a local difference-of-squares update from free and nudged steady states. The \emph{projector-based} rule
instead realizes the \emph{analytical} resistance gradient by physically implementing $\Omega_{A/R}$ and
$\Omega_{A/R}^\top$ with a small number of voltage- and current-mode experiments, avoiding finite-$\beta$ bias and
requiring only a single network (no replica).

We also showed that learnability is constrained by topology and by the choice of input/output edges: edge selection
amounts to tuning a submatrix of $\Omega_{A/R}$, and poor selections can limit expressivity. Experiments on regression
and binary classification indicate that projector-based training is typically more stable than two-phase learning,
while achieving comparable performance when the latter succeeds. Finally, our operator/thermodynamic perspective
suggests clear extensions to noise, dynamics, and nonlinear devices, and to co-design of graph structure, edge
selection, and learning rules. This will be the scope of future analysis.

\section*{Acknowledgements}
The author's work was conducted under the auspices of the National Nuclear Security Administration of the United States Department of Energy at Los Alamos National Laboratory (LANL) under Contract No. DE-AC52-06NA25396,   and was supported in part by the DOE Advanced 
Scientific Computing Research (ASCR) program under Award No.~DE-SCL0000118. J.L., A.D., and F.B.  also gratefully acknowledge support from the Center for Nonlinear Studies at LANL. F.C. is an employee of Planckian, but this work was initiated while a LANL employee.

\section*{Data availability}
The software package used to simulate this system can be found on GitHub\footnote{\url{https://github.com/jlin1212/differentiable-circuits}}, along with methods for dataset access and generation.

\section*{References}

\bibliography{bibliography}

\clearpage
\onecolumngrid
{\LARGE Appendix}
\subsection{Physical derivation of Equilibrium Propagation} \label{app:EPder}

Let $y = (y_1,\dots,y_n)^\top \in \mathbb{R}^n$ denote a set of coarse variables (generalized coordinates)
describing a physical system with the inputs clamped. We assume the system is in contact with a thermal
bath at (approximately) fixed temperature $T$; in this setting the appropriate thermodynamic potential for
relaxation is a \emph{free energy} (rather than the internal energy alone).

Accordingly, we postulate the existence of a smooth (non-equilibrium) free-energy function
\begin{equation}
  \mathcal F_\theta(\,\cdot\,;T): \mathbb{R}^n \to \mathbb{R},
\end{equation}
parametrized by $\theta \in \Theta \subset \mathbb{R}^p$. When it is useful to be explicit about its
thermodynamic meaning one may think of
\begin{equation}
  \mathcal F_\theta(y;T) \;=\; E_\theta(y) \;-\; T\,S(y),
  \label{eq:free-energy-def}
\end{equation}
where $E_\theta$ is an effective internal energy and $S$ an effective entropy associated with the coarse
description.\footnote{More generally, depending on which macroscopic constraints are held fixed, $F_\theta$
may represent the appropriate thermodynamic potential (e.g., Helmholtz or Gibbs free energy). For the present
derivation it suffices that $F_\theta$ is a Lyapunov-like potential whose gradient yields the thermodynamic forces.}
In the low-temperature limit, provided $S(y)$ remains finite (or grows slower than $1/T$),
\begin{equation}
  \lim_{T\to 0} \mathcal F_\theta(y;T) \;=\; E_\theta(y),
\end{equation}
so the formalism reduces to an energy-based description.

We define the corresponding \emph{thermodynamic forces} $X_i$ as components of the free-energy gradient:
\begin{equation}
  X_i(y) \;:=\; \frac{\partial {\mathcal F}_\theta(y;T)}{\partial y_i},
  \qquad i = 1,\dots,n,
\end{equation}
or, in vector form,
\begin{equation}
  X(y) \;:=\; \nabla_y {\mathcal F}_\theta(y;T).
\end{equation}
The associated \emph{fluxes} are the rates of change of the coarse variables,
\begin{equation}
  J_i \;:=\; \dot y_i,
  \qquad
  J(y) := \dot y.
\end{equation}

Near equilibrium, generalized forces $\{X_i\}$ and conjugate fluxes $\{J_i\}$ are related by Onsager
linear-response relations,
\begin{equation}
  J_i \;=\; -\sum_j L_{ij}X_j,
  \label{eq:onsager-scalar}
\end{equation}
with phenomenological coefficients $L_{ij}$ chosen so that relaxation produces nonnegative entropy production.
In compact notation, Eq.~\eqref{eq:onsager-scalar} reads
\begin{equation}
  J(y) \;=\; -\,L\,X(y),
  \label{eq:onsager-matrix}
\end{equation}
where $L \in \mathbb{R}^{n \times n}$ is the Onsager matrix. The matrix $L$ can be decomposed into symmetric and
antisymmetric parts,
\begin{equation}
  L \;=\; \Gamma - A,
  \qquad
  \Gamma^\top = \Gamma \succeq 0,\;\; A^\top = -A,
\end{equation}
where $\Gamma$ encodes dissipative couplings and $A$ encodes reversible couplings. In terms of $y(t)$, the
dynamics reads
\begin{equation}
  \dot y
  \;=\; -\,\Gamma\,\nabla_y {\mathcal F}_\theta(y;T) \;+\; A\,\nabla_y {\mathcal F}_\theta(y;T).
  \label{eq:y-dynamics-Onsager}
\end{equation}

The time derivative of the free energy along a trajectory $t \mapsto y(t)$ is
\begin{align}
  \frac{d}{dt} {\mathcal F}_\theta(y(t);T)
  &= \nabla_y {\mathcal F}_\theta(y;T)^\top \dot y \nonumber \\
  &= X(y)^\top \big( -\,\Gamma\,X(y) + A\,X(y) \big) \nonumber \\
  &= -\,X(y)^\top \Gamma X(y),
  \label{eq:Fdot}
\end{align}
since $X^\top A X = 0$ for any antisymmetric $A$. Because $\Gamma$ is positive semidefinite,
\begin{equation}
  \frac{d}{dt} {\mathcal F}_\theta(y(t);T) \;\le\; 0,
\end{equation}
with equality only at points where $X(y) = 0$ on the dissipative subspace. Thus $\mathcal F_\theta$ is a Lyapunov function for
\eqref{eq:y-dynamics-Onsager}, and the dynamics relaxes toward stable critical points of $F_\theta(\cdot;T)$ (and, in the
limit $T\to 0$, of $U_\theta$) compatible with the constraints encoded by $A$. In the following, we assume that $T$ is sufficiently small, and that we can approximate $\mathcal F_\theta$ with the energy $E_\theta$. Otherwise, the formalism follows naturally replacing $E_\theta$ with $\mathcal F_\theta$.

In many situations we may neglect reversible couplings, or treat them separately and retain only the dissipative part.
This yields the gradient-flow limit
\begin{equation}
  \dot y \;=\; -\,\Gamma\,\nabla_y E_\theta(y),
  \qquad \Gamma \succ 0,
  \label{eq:gradflow-flux}
\end{equation}
which we adopt as the baseline deterministic dynamics for the derivation of EP.

Equilibrium Propagation introduces learning by applying a small static \emph{training field} that biases the system
toward desired outputs. In the thermodynamic language above, this corresponds to modifying the energy by an additional
term that depends on a small control parameter $\beta > 0$ (note that $\beta$ below is not $1/k_B T$ but a generic parameter):
\begin{equation}
  F(\beta, \theta, y)
  \;:=\;
  E_\theta(y) + n(\beta,\theta,y),
  \label{eq:nudged-energy-flux}
\end{equation}
where the nudge $n(\beta,\theta,y)$ satisfies
\begin{equation}
  n(0,\theta,y) = 0,
  \qquad
  n(\beta,\theta,y) = O(\beta) \quad \text{as } \beta \to 0.
\end{equation}
The corresponding forces are
\begin{equation}
  X^\beta(y) \;=\; \nabla_y F(\beta, \theta, y)
  \;=\; \nabla_y E_\theta(y) + \nabla_y n(\beta,\theta,y),
\end{equation}
and the dissipative dynamics becomes
\begin{equation}
  \dot y
  \;=\;
  -\,\Gamma\,\nabla_y F(\beta, \theta, y).
\end{equation}

Equilibrium Propagation is recovered as the special case
\begin{equation}
  n(\beta,\theta,y) = \beta\,C(\theta,y),
\end{equation}
where $C(\theta,y)$ is a cost function that measures the disagreement between system outputs and targets. In that case,
the nudging corresponds to a linear coupling between the physical degrees of freedom and the cost.

For fixed $(\beta,\theta)$, the dynamics relaxes to a (locally) stable equilibrium $y_\theta^\beta$ satisfying
\begin{equation}
  \nabla_y F(\beta, \theta,y_\theta^\beta) = 0.
\end{equation}
The \emph{free phase} corresponds to $\beta = 0$ and equilibrium $y_\theta^0$ determined by the unperturbed energy
$E_\theta$. The \emph{nudged phase} corresponds to a small but nonzero $\beta > 0$ and equilibrium $y_\theta^\beta$ of
the perturbed energy $E_\theta^\beta$.

The key observation behind EP is that, in the linear-response regime $\beta \to 0$, the difference between local
observables evaluated at $y_\theta^\beta$ and $y_\theta^0$ encodes the gradient of a training objective with respect to
$\theta$. This leads to a two-phase learning rule in which parameter updates are computed from local measurements in the
free and nudged phases.

To make this precise, it is useful to distinguish between \emph{partial} derivatives with respect to explicit arguments
and \emph{total} derivatives that also account for the dependence of the equilibrium state on those arguments.

Let $h(\beta,\theta,y^\beta)$ be any smooth function, where $y^\beta$ denotes the (locally) stable equilibrium at
a given pair $(\beta,\theta)$. We define the partial derivative with respect to the explicit argument $\beta$ as
\begin{equation}
  \partial_\beta h(\beta,\theta,y^\beta)
  :=
  \lim_{\alpha \to 0} \frac{1}{\alpha}
  \Big( h(\beta + \alpha,\theta,y^\beta) - h(\beta,\theta,y^\beta) \Big),
\end{equation}
i.e., differentiation with respect to $\beta$ while holding $y^\beta$ fixed. In contrast, the total derivative of $h$
with respect to $\beta$ is
\begin{equation}
  \frac{d}{d\beta} h(\beta,\theta,y^\beta)
  :=
  \lim_{\alpha \to 0} \frac{1}{\alpha}
  \Big( h(\beta + \alpha,\theta,y^{\beta+\alpha}) - h(\beta,\theta,y^\beta) \Big),
\end{equation}
which includes both the explicit dependence on $\beta$ and the implicit dependence via the equilibrium
$y_\theta^\beta$. We use the analogous notation $\partial_\theta$ and $\dfrac{d}{d\theta}$ for differentiation with
respect to the parameters $\theta$.

Given the nudged energy in \eqref{eq:nudged-energy-flux}, we consider dynamics with the same Onsager operator
$\Gamma$:
\begin{equation}
  \dot y \;=\; -\,\Gamma\,\nabla_y F(\beta, \theta, y).
\end{equation}
The corresponding energy dissipation is
\begin{eqnarray}
  \frac{d}{dt} F(\beta, \theta, y(t))
  &=& \nabla_y F(\beta, \theta, y)^\top \dot y
  \nonumber \\
  &=& \big( \nabla_y E_\theta(y) + \nabla_y n(\beta,\theta,y) \big)^\top
      \big( -\,\Gamma \big( \nabla_y E_\theta(y) + \nabla_y n(\beta,\theta,y) \big) \big)
  \nonumber \\
  &=& -\,\big\| \Gamma^{1/2} \big( \nabla_y E_\theta(y) + \nabla_y n(\beta,\theta,y) \big) \big\|^2
      \;\le\; 0,
  \label{eq:nudged-diss}
\end{eqnarray}
using that $\Gamma \succeq 0$. Thus, for fixed $\beta$, the nudged energy $E_\theta^\beta$ is again nonincreasing along
trajectories and plays the role of a Lyapunov function for the nudged dynamics.

In this formulation, the training field enters by modifying the energy itself. An alternative would be to keep
$E_\theta$ unchanged and add $\nabla_y n(\beta,\theta,y)$ as an extra nonconservative force on top of
$\nabla_y E_\theta(y)$. In that case the dynamics would, in general, no longer be of pure gradient-flow form and one
would lose the simple monotonicity property in \eqref{eq:nudged-diss}, which is what we will rely on when deriving
linear-response identities for Equilibrium Propagation.

We now characterize how the equilibrium $y_\theta^\beta$ changes with the nudging strength $\beta$.
For each fixed $(\beta,\theta)$, the equilibrium satisfies the stationarity condition
\begin{equation}
  \nabla_y F(\beta, \theta, y_\theta^\beta)
  \;=\;
  \nabla_y E_\theta(y_\theta^\beta) + \nabla_y n(\beta,\theta,y_\theta^\beta)
  \;=\; 0.
  \label{eq:stat-nudged}
\end{equation}
Let $y^0 := y_\theta^0$ denote the free-phase equilibrium at $\beta = 0$, so that
\begin{equation}
  \nabla_y E_\theta(y^0) = 0.
\end{equation}
We assume that $y^0$ is a (locally) stable equilibrium and define the Hessian
\begin{equation}
  H \;:=\; \nabla_y^2 E_\theta(y^0).
\end{equation}

For small $\beta$, we expand \eqref{eq:stat-nudged} around $(\beta,y) = (0,y^0)$. Writing
$y^\beta := y_\theta^\beta$ and using $\nabla_y E_\theta(y^0)=0$ gives, to first order in $\beta$ and
$y^\beta - y^0$,
\begin{eqnarray}
  0
  &=&
  \nabla_y E_\theta(y^\beta) + \nabla_y n(\beta,\theta,y^\beta)
  \nonumber \\
  &\approx&
  \nabla_y E_\theta(y^0)
  + H\,(y^\beta - y^0)
  + \nabla_y n(\beta,\theta,y^0)
  \nonumber \\
  &=&
  H\,(y^\beta - y^0) + \nabla_y n(\beta,\theta,y^0).
\end{eqnarray}
Thus
\begin{equation}
  y^\beta - y^0
  \;=\;
  -\,H^{-1}\,\nabla_y n(\beta,\theta,y^0) + O(\beta^2).
\end{equation}
Using that $n(\beta,\theta,y) = O(\beta)$ as $\beta \to 0$, we expand
\begin{equation}
  \nabla_y n(\beta,\theta,y^0)
  =
  \beta\,\nabla_y \partial_\beta n(0,\theta,y^0) + O(\beta^2),
\end{equation}
which yields
\begin{equation}
  y^\beta - y^0
  \;=\;
  -\,\beta\,H^{-1}\,\nabla_y \partial_\beta n(0,\theta,y^0) + O(\beta^2).
  \label{eq:IFT-state}
\end{equation}
Equivalently, the \emph{static susceptibility} of the equilibrium with respect to the nudging strength is
\begin{equation}
  \frac{d y^\beta}{d\beta}\Big|_{\beta=0}
  \;=\;
  -\,H^{-1}\,\nabla_y \partial_\beta n(0,\theta,y^0).
  \label{eq:susceptibility}
\end{equation}
(If $H$ is only semidefinite, $H^{-1}$ may be interpreted as the inverse restricted to the stable subspace, or as the
Moore--Penrose pseudoinverse.)

Note that \eqref{eq:susceptibility} depends only on the energy landscape $E_\theta$ and the form of the nudge
$n(\beta,\theta,y)$, but not on the specific choice of the Onsager operator $\Gamma$ in the dynamics: $\Gamma$ affects
how the system relaxes to equilibrium, but not the location of the equilibrium itself, when $\Gamma \succ 0$.


We now derive the equilibrium propagation identity in this physical setting. Define
\begin{equation}
  \ell(y) \;:=\; \partial_\beta n(0,\theta,y),
\end{equation}
and take the training objective to be the cost evaluated at the free-phase equilibrium:
\begin{equation}
  J(\theta) \;:=\; \ell(y^0(\theta)),
  \label{eq:J-def}
\end{equation}
where $y^0(\theta)$ is a (locally) stable equilibrium of $E_\theta$. In standard Equilibrium Propagation with
\begin{equation}
  n(\beta,\theta,y) = \beta\,C(\theta,y),
\end{equation}
we have $\ell(y) = C(\theta,y)$ and therefore $J(\theta) = C(\theta,y^0)$.

We first express $\nabla_\theta J(\theta)$ in terms of the response of the free equilibrium $y^0$ to a change in
$\theta$. Differentiating the free equilibrium condition
\begin{equation}
  \nabla_y E_\theta(y^0) = 0
\end{equation}
with respect to $\theta$ and using the implicit function theorem gives
\begin{equation}
  \frac{\partial y^0}{\partial \theta}
  \;=\;
  -\,H^{-1}\,\nabla_y \partial_\theta E_\theta(y^0),
  \qquad
  H \;:=\; \nabla_y^2 E_\theta(y^0),
  \label{eq:IFT-param}
\end{equation}
provided $H$ is invertible on the stable subspace.

Using $J(\theta) = \ell(y^0(\theta))$ and the chain rule, we obtain
\begin{eqnarray}
  \nabla_\theta J(\theta)
  &=& \left( \nabla_y \ell(y^0) \right)^\top \frac{\partial y^0}{\partial \theta}
  \nonumber \\
  &=& \left( \nabla_y \ell(y^0) \right)^\top
      \Big( -\,H^{-1}\,\nabla_y \partial_\theta E_\theta(y^0) \Big).
  \label{eq:grad-J}
\end{eqnarray}

On the other hand, we can relate $\nabla_\theta J(\theta)$ to the difference of energy derivatives between the nudged
and free equilibria. Consider the quantity
\begin{equation}
  \partial_\theta F(\beta, \theta, y^\beta) - \partial_\theta E_\theta^0(y^0),
\end{equation}
where $E_\theta^\beta = E_\theta + n(\beta,\theta,\cdot)$ and $y^\beta$ is the nudged equilibrium. We decompose it as
\begin{eqnarray}
  \partial_\theta F(\beta, \theta, y^\beta) - \partial_\theta E_\theta^0(y^0)
  &=&
  \big( \partial_\theta F(\beta, \theta, y^\beta) - \partial_\theta E_\theta^0(y^\beta) \big)
   + \big( \partial_\theta E_\theta^0(y^\beta) - \partial_\theta E_\theta^0(y^0) \big).
\end{eqnarray}
The first bracket is just the explicit $\theta$–dependence introduced by the nudge:
\begin{equation}
  \partial_\theta F(\beta, \theta, y^\beta) - \partial_\theta E_\theta^0(y^\beta)
  \;=\; \partial_\theta n(\beta,\theta,y^\beta).
\end{equation}
If $n$ has no explicit $\theta$–dependence (for example $n(\beta,\theta,y) = \beta\,\ell(y)$), this term vanishes.
When $\partial_\theta n \neq 0$ but is $O(\beta)$, it contributes an $O(\beta)$ correction to the EP identity that can
be handled separately. For clarity, we focus on the common case where $n$ depends on $\theta$ only through $y$ and
hence
\begin{equation}
  \partial_\theta n(\beta,\theta,y^\beta) = 0.
\end{equation}

We then expand the second term to first order in $y^\beta - y^0$:
\begin{eqnarray}
  \partial_\theta E_\theta^0(y^\beta) - \partial_\theta E_\theta^0(y^0)
  &\approx&
  \nabla_y \partial_\theta E_\theta(y^0)^\top (y^\beta - y^0)
  + O(\|y^\beta - y^0\|).
\end{eqnarray}
Using the linear response of the equilibrium state with respect to $\beta$ in \eqref{eq:IFT-state}, and the definition
$\ell(y) = \partial_\beta n(0,\theta,y)$, we have
\begin{equation}
  y^\beta - y^0
  \;=\;
  -\,\beta\,H^{-1}\,\nabla_y \ell(y^0) + O(\beta^2).
\end{equation}
Hence
\begin{eqnarray}
  \partial_\theta F(\beta, \theta, y^\beta) - \partial_\theta E_\theta^0(y^0)
  &\approx&
  \nabla_y \partial_\theta E_\theta(y^0)^\top \big( -\,\beta\,H^{-1}\,\nabla_y \ell(y^0) \big)
  + O(\beta^2)
  \nonumber \\
  &=&
  -\,\beta\,\nabla_y \partial_\theta E_\theta(y^0)^\top H^{-1}\,\nabla_y \ell(y^0)
  + O(\beta^2).
\end{eqnarray}
Dividing by $\beta$ and taking the limit $\beta \to 0$ yields
\begin{equation}
  \lim_{\beta \to 0} \frac{1}{\beta}
  \Big( \partial_\theta F(\beta, \theta, y^\beta) - \partial_\theta E_\theta^0(y^0) \Big)
  \;=\;
  -\,\nabla_y \partial_\theta E_\theta(y^0)^\top H^{-1}\,\nabla_y \ell(y^0).
\end{equation}
Comparing with \eqref{eq:grad-J} shows that
\begin{equation}
  \nabla_\theta J(\theta)
  \;=\;
  \lim_{\beta \to 0} \frac{1}{\beta}
  \Big[
    \partial_\theta F(\beta, \theta, y^\beta) - \partial_\theta E_\theta^0(y^0)
  \Big],
  \label{eq:ep-identity}
\end{equation}
which is the equilibrium propagation (EP) identity \cite{ScellierBengio2017EP}. It expresses the gradient of the training
objective with respect to parameters as the linear-response limit of a difference of energy derivatives evaluated at the
nudged and free equilibria.

In practice, simulations of energy-based models often approximate the continuous-time dynamics
\begin{equation}
  \dot y = -\,\Gamma\,\nabla_y F(\beta, \theta, y)
\end{equation}
by explicit Euler updates with step size $\eta > 0$. In this case, the discrete free and nudged phases are described by
\begin{align}
  \text{Free:}\qquad
  &y^0_{k+1} \;=\; y^0_k - \eta\,\Gamma\,\nabla_y E_\theta(y^0_k),
  \label{eq:euler-free}
  \\
  \text{Nudged:}\qquad
  &y^\beta_{k+1} \;=\; y^\beta_k - \eta\,\Gamma\Big(\nabla_y E_\theta(y^\beta_k)
     + \nabla_y n(\beta,\theta,y^\beta_k)\Big).
  \label{eq:euler-nudged}
\end{align}
Under standard smoothness and stability assumptions on $E_\theta$ and a sufficiently small step size $\eta$, these
iterations converge to the continuous equilibria $y^0$ and $y^\beta$ respectively, and the EP identity
\eqref{eq:ep-identity} can be approximated from finite differences measured between the two phases.

\subsection{Generalized Equilibrium Propagation}
\label{app:gep}

In this section, let $y = (y_1,\dots,y_n)^\top \in \mathbb{R}^n$ denote a set of coarse variables (generalized coordinates) describing
an input-clamped physical system. Here we wish to show in which sense Equilibrium Propagation (EP) and Coupled Learning (CL) are both two-phase schemes, one compares a free steady
state to a nearby, weakly biased (nudged) steady state and extracts a parameter update from their difference. The key
difference is the order of the perturbation with respect to the nudging strength. To treat both cases within a
single perturbative framework, we introduce \emph{Generalized Equilibrium Propagation}. 
We define a $\beta$-dependent (nudged) free energy
\begin{equation}
  F(\beta, \theta, y; \bz):= {\mathcal F}_\theta(y; \bz) + n(\beta,\theta,y; \bz),
  \qquad \beta>0
  \label{eq:nudged-energy}.
\end{equation}
such that $n(0,\theta,y; \bz)=0$ for all $\theta, y$). In this section, we will generally omit the dependence on the clamped input $\bz$ for brevity. We will also consider the low temperature limit, e.g. $|T S|\ll |E|$, but everything follows below by replacing $E_\theta$ with $\mathcal F_\theta$ in the finite temperature case.

Let $y^0(\theta)$ denote a (free) equilibrium of $E_\theta$, i.e.\ $\nabla_y E_\theta(y^0)=0$, and let $y^\beta(\theta)$ denote a (nudged) equilibrium of $F(\beta,\theta,\cdot)$, i.e.\ $\nabla_y F(\beta,\theta,y^\beta)=0$ for each $\beta$. We assume that $n(\beta, \theta, y^{0}) = O(\beta^k)$ for some constant $k \geq 1$.

Note that as long as $y^{\beta}$ is a differentiable function of $\beta$, then $\|y^{\beta} - y^0\| = O(\beta^k)$. To see this, note that Equation \eqref{eq:nudged-energy} implies that
\begin{align}
  \nabla_{\y} F(\beta, \theta, y^{\beta}) &= \nabla_{\y} E_\theta(y^{\beta})\ + \nabla_{\y} n(\beta, \theta, y^{\beta}).
\end{align}
Assume that the Hessian $H_\theta := \nabla_y^2 E_\theta(y^0)$ is invertible (nondegenerate equilibrium). Since $y^{\beta}$ is a differentiable function of $\beta$, 
\begin{align}
  \nabla_{\y} F(\beta, \theta, y^{\beta})
  &= \nabla_{\y} E_\theta(y^{0})
     + \nabla_{\y}^2 E_\theta(y^{0}) (\y^\beta - \y^0)
     + O(\| \y^\beta - \y^0 \|^2) + \nabla_{\y} n(\beta, \theta, y^{0}) + \nabla_{\y}^2 n(\beta, \theta, y^{0}) (\y^\beta - \y^0) + O(\| \y^\beta - \y^0 \|^2) 
\end{align}
By definition, $\nabla_{\y} E_\theta(y^{0}) = 0$ and $\nabla_{\y} F(\beta, \theta, y^{\beta}) = 0$. Also, since $n(\beta, \theta, y^{0}) = O(\beta^k)$, $\nabla_{\y} n(\beta, \theta, y^{0}) = O(\beta^k)$. Thus
\begin{align}
  0&= \left (H_\theta + \nabla_{\y}^2 n(\beta, \theta, y^{0}) \right )(\y^\beta - \y^0)
     + O(\beta^k) + O(\| \y^\beta - \y^0 \|^2) 
\end{align}

Note that $H_\theta$ is invertible and $\nabla_{\y}^2 n(\beta, \theta, y^{0})$ is a differentiable function of $\beta$. Thus, for sufficiently small $\beta$, $H_\theta + \nabla_{\y}^2 n(\beta, \theta, y^{0})$ is invertible and $(H_\theta + \nabla_{\y}^2 n(\beta, \theta, y^{0}))^{-1} = O(1)$.

Left-multiplying both sides by $(H_\theta + \nabla_{\y}^2 n(\beta, \theta, y^{0}))^{-1}$ gives us
\begin{align}
  0 &= \y^\beta - \y^0 + O(\| \y^\beta - \y^0 \|^2) + O(\beta^k) .
\end{align}
For small $\beta$, the $O(\| \y^\beta - \y^0 \|^2)$ term  is negligible compared to the other terms, so we conclude that $\|y^\beta - y^0\| = O(\beta^k)$.

For small $\beta$, we then have the Taylor expansion
\begin{align}
    n(\beta, \theta, y^{0}) &= 
    n(0, \theta, y^{0}) + \sum_{j = 1}^{\infty}
    {\beta^j \over j!} {\partial^j n \over \partial \beta^j} (0, \theta, y^{0})
    \nonumber \\
    &= 
    {\beta^k \over k!} {\partial^k n \over \partial \beta^k} (0, \theta, y^0) + O(\beta^{k+1})
    \nonumber \\
    &= \beta^k h(\theta, y^0) + O(\beta^{k+1})
\end{align}
where here we define 
\begin{align}
    h(\theta, y) :={1 \over k!} {\partial^k n \over \partial \beta^k} (0, \theta, y).
\end{align}

We also assume that $F(\beta, \theta, y^\beta) - F(\beta, \theta, y) = O(\| \y^{\beta} - \y\|^m)$ for some constant $m > 1$. Then
\begin{align}
    F(\beta, \theta, y^{\beta}) - F(\beta, \theta, y^0)
    &=
    O(\| \y^{\beta} - \y^0\|^m)
    \nonumber \\
    &=
    O(\beta^{mk}), 
\end{align}
where the last line follows from the fact that $\|y^{\beta} - y^0\| = O(\beta^k)$. 

Next, note that by the chain rule,

\begin{align}
    {d F \over d \theta}(\beta, \theta, y^{\beta}) -
    {d F \over d \theta}(0, \theta, y^{0})
    &=
    {\partial F \over \partial \theta}(\beta, \theta, y^{\beta})
    +
    {\partial F \over \partial y}(\beta, \theta, y^{\beta}) \cdot {\partial y^{\beta} \over \partial \theta}(\theta)
    -
    \left ( {\partial F \over \partial \theta}(0, \theta, y^{0})
    +
    {\partial F \over \partial y}(0, \theta, y^{0}) \cdot {\partial y^{0} \over \partial \theta}(\theta)
    \right )
     \nonumber \\
    &=
    {\partial F \over \partial \theta}(\beta, \theta, y^{\beta})
    - {\partial F \over \partial \theta}(0, \theta, y^{0})
\end{align}

Then
\begin{align}
    {\partial F \over \partial \theta}(\beta, \theta, y^{\beta}) -
    {\partial F \over \partial \theta}(0, \theta, y^{0})
    &=
    {d F \over d \theta}(\beta, \theta, y^{\beta}) -
    {d F \over d \theta}(0, \theta, y^{0})
    \nonumber \\
    &= 
    {d \over d \theta} \left ( F(\beta, \theta, y^{\beta}) - F (\beta, \theta, y^{0}) 
    + F (\beta, \theta, y^{0})  -
    F(0, \theta, y^{0}) \right )
    \nonumber \\
    &=
    {d \over d \theta} \left (
    O(\beta^{mk}) + n(\beta, \theta, y^{0})\right )
    \nonumber \\
    &=
    {d \over d \theta} \left ( \beta^k h(\theta, y^{0}) + O(\beta^{k+1})+ O(\beta^{mk}) \right ). 
\end{align}

Since $k \geq 1$ and $m > 1$,  we have $k < mk$. Thus
\begin{align}
    \lim_{\beta^k \to 0} {1 \over \beta^k} \left ( {\partial F \over \partial \theta}(\beta, \theta, y^{\beta}) -
    {\partial F \over \partial \theta}(0, \theta, y^{0}) \right )
    &=
    \lim_{\beta^k \to 0} {1 \over \beta^k}  {d \over d \theta} \left (\beta^k h(\theta, y^{0}) + O(\beta^{k+1})+ O(\beta^{mk})\right )
    \nonumber \\
    &=
    \lim_{\beta^k \to 0} {d \over d \theta} \left (h(\theta, y^0) + O(\beta)
    + O(\beta^{mk - k}) \right )
    \nonumber \\
    &=
    {d h\over d \theta} (\theta, y^0) .
\end{align}

If we define the objective function as
\begin{align}
    J(\theta) = h(\theta, y^0),
\end{align}
we have
\begin{align}
    {\partial J\over \partial \theta} \left(\theta \right)
    =
    \lim_{\beta^k \to 0} {1 \over \beta^k} \left ( {\partial F \over \partial \theta}(\beta, \theta, y^{\beta}) -
    {\partial F \over \partial \theta}(0, \theta, y^{0}) \right )
    \label{eq: GEP}
\end{align}

In Equilibrium Propagation, $n(\beta, \theta, y^{0}) = \beta C(\theta, y_{0})$, so $k = 1$ and $h(\theta, y^0) = C(\theta, y_{0})$. Plugging this into the above yields the standard equilibrium propagation identity.

\subsection{EP and CL as two-phase nudge}
\label{app:epcl}

In this section (as in the previous one), we let $y = (y_1,\dots,y_n)^\top \in \mathbb{R}^n$ denote a set of coarse variables (generalized coordinates) describing
an input-clamped physical system.

\textbf{Equilibrium Propagation as a Special Case of Generalized Equilibrium Propagation}.

In Equilibrium Propagation, the nudge is linear in $\beta$,
\begin{align}
  n(\beta, \theta, y) &= \beta C(\theta, y),
\end{align}
so $k=1$ and $h(\theta,y^0)=C(\theta,y^0)$.

Equation \eqref{eq: GEP} then reduces to the usual EP identity with training objective $J(\theta) = C(\theta,y^0(\theta))$.

Also,

\begin{align}
     F(\beta, \theta, y) - F(\beta, \theta, y^{\beta})
     &=
     F(\beta, \theta, y^{\beta}) +(y - y^{\beta}) \nabla_{y} F(\beta, \theta, y^{\beta}) + O(\|y - y^{\beta}\|^2) -  F(\beta, \theta, y^{\beta})
     \\
     &= O(\|y - y^{\beta}\|^2),
    \nonumber
\end{align}

so $m = 2$.

\textbf{Coupled Learning as a Special Case of Generalized Equilibrium Propagation}.

In the case of  Coupled Learning: Assuming that the correct output is $v_T$, we have the nudged energy
\begin{align}
    F(\beta, \theta, y) = F(0, \theta, y + \epsilon_{CL} )
\end{align}
where $\epsilon_{CL}  =\beta P_o^{\top}(v_T - P_o y)$ is an error vector and $P_o$ is a linear operator that selects the output states of the coarse variable $y$.

Then
\begin{align}
    n(\beta, \theta, y^0) &= F(\beta, \theta, y^0) - F(0, \theta, y^0)
    \nonumber \\
    &= F(0, \theta, y^0 + \beta P_o^{\top}(v_T - P_o y^0) ) - F(0, \theta, y^0)
    \nonumber  \\
    &= \beta (v_T - P_o y^0)^{\top} P_o  \nabla_y F(0, \theta, y^0) + \frac{\beta^2}{2} (v_T - P_o y^0)^{\top} P_o \nabla_y^2 F(0, \theta, y^0) P_o^{\top} (v_T - P_o y^0) + O(\beta^3)
    \nonumber  \\
    &= \frac{\beta^2}{2} (v_T - P_o y^0)^{\top} P_o \nabla_y^2 F(0, \theta, y^0) P_o^{\top} (v_T - P_o y^0) + O(\beta^3)
\end{align}
since $\nabla_y F(0,\theta,y^0) = 0$ at the free equilibrium $y^0$. Thus $n(\beta, \theta, y)$ is second order in $\beta$, i.e. $k = 2$ in the GEP framework. We also have $m = 2$ by the same reasoning as in the case of Equilibrium Propagation.

\subsection{Physical biasing schemes and projectors}
\label{app:projectors}

We now make explicit the \emph{linear-operator} viewpoint that underlies our biasing and gradient-estimation schemes.
Consider a connected directed graph $\mathcal{G}$ with $N$ nodes and $\mrE$ edges, together with a fixed orientation
$\mathcal{O}$ (i.e., an arbitrary but fixed choice of direction for each edge). Each edge $e$ consists of an ideal
voltage source $\bs_e$ in series with a resistor of resistance $r_e>0$. We stack edgewise
quantities into vectors: sources $\bs\in\mathbb{R}^{\mrE}$, currents $\bi\in\mathbb{R}^{\mrE}$, and Ohmic drops
$\bv:=R\bi\in\mathbb{R}^{\mrE}$, where
\begin{equation}
R=\diag(r_e)\succ 0.
\end{equation}
Let $A\in\mathbb{R}^{C\times \mrE}$ be a cycle (loop) matrix spanning the $C=\mrE-N+1$ fundamental cycles of the graph. The
cycle constraints encode Kirchhoff's Voltage Law (KVL): the signed sum of voltage drops around each fundamental cycle
must be zero. With an edge source $\bs$ and Ohmic drops $R\bi$, this gives
\begin{equation}
A(\bs+\bv)=\bf 0.
\label{eq:kvl_sources_Ohmic}
\end{equation}


The total power dissipated in the resistors, $\bv^\top R^{-1} \bv$ is minimized subject to the above constraint.

We can solve this constrained optimization problem using the method of Lagrange multipliers. Define the Lagrangian:
\begin{equation}
L(\bv, \mathbf{\lambda}; \bs) = \bv^\top R^{-1} \bv + \mathbf{\lambda}^\top A (\bs + \bv)
\end{equation}
where $\mathbf{\lambda} \in \mathbb{R}^C$ is the vector of Lagrange multipliers and $A$ is the cycle matrix. 

Now we have
\begin{equation}
\nabla_{\bv} L(\bv^*, \lambda^*; \bs) = 2 R^{-1} \bv^* + A^{\top} \lambda^* = 0
\end{equation}
Thus $\bv^* = -{1 \over 2} R A^{\top} \lambda^*$.
Similarly,
\begin{equation}
\nabla_{\lambda} L(\bv^*, \lambda^*; \bs) = A (\bs + \bv^*) = 0,
\end{equation}
so
\begin{align}
A (\bs -{1 \over 2} R A^{\top} \lambda^*) &= 0
\end{align}
and thus $\lambda^* = 2 (A R A^{\top})^{-1} A \bs$. Substituting this back into our expression for $\bv^*$ gives us $\bv^* = - R A^{\top}(A R A^{\top})^{-1} A \bs$.

This motivates defining the \emph{cycle projector}
\begin{equation}
\Omega_{A/R}:=RA^\top(ARA^\top)^{-1}A,
\label{eq:omega_def_repeat}
\end{equation}
so that the circuit implements the linear map
\begin{equation}
\bv = -\Omega_{A/R}\,\bs,
\qquad \text{and} \qquad
\bi = -R^{-1}\Omega_{A/R}\,\bs.
\label{eq:operator_map_sv}
\end{equation}
A key structural fact is that $\Omega_{A/R}$ is an \emph{oblique projector}:
\begin{equation}
\Omega_{A/R}^2=\Omega_{A/R}.
\end{equation}
Intuitively, $\Omega_{A/R}$ extracts the component of an imposed edge-source pattern that is compatible with the
graph's cycle constraints and converts it into the steady-state distribution of Ohmic drops. Equivalently, each column
of $-\Omega_{A/R}$ is the network response (Ohmic drops on \emph{all} edges) to a unit source applied on a single edge.

Equations \eqref{eq:operator_map_sv} 
provide the bridge between the physical two-phase
picture and the projector-based schemes used later: voltage-mode experiments implement $\bs\mapsto\bv$ (left
multiplication by $-\Omega_{A/R}$), while reciprocal/current-mode manipulations can be used to access the adjoint
$\Omega_{A/R}^\top$ in the projector-based gradient estimator developed in the next subsections.

This operator viewpoint is useful for two reasons. First, it makes inference explicit: a \emph{voltage-mode} experiment
that imposes $\bs$ and measures $\bv$ is exactly a multiplication by $-\Omega_{A/R}$. Second, it clarifies what must be
implemented physically for learning: our contrastive (two-phase) rule estimates gradients from differences between two
such voltage-mode experiments, while our projector-based estimator additionally requires access to the adjoint action
$\Omega_{A/R}^\top$ via reciprocal/current-mode manipulations.

A local minima of Equation \eqref{eq:ProjectorAnalyticalGradient_MainText} satisfies 
\begin{align}
    ((\mathbf{v} - \mathbf{v}_T)^{\top}(I-\Omega_{A/R})\,\text{diag}(\mathbf{i}))^{\top} &= \bf 0 \nonumber
    \\
    \text{diag}(\mathbf{i})(I-\Omega_{A/R}^{\top})(\mathbf{v} - \mathbf{v}_T) &= \bf 0 .
\end{align}
Assuming currents are nonzero on all edges (and thus $\text{diag}(\mathbf{i})$ is full rank), this means that
\begin{align}
    (I-\Omega_{A/R}^{\top})(\mathbf{v} - \mathbf{v}_T) &= \bf 0 \nonumber
    \\
    \mathbf{v} - \mathbf{v}_T &= \Omega_{A/R}^{\top}(\mathbf{v} - \mathbf{v}_T) \nonumber
\end{align}
so $(\mathbf{v} - \mathbf{v}_T)$ is an eigenvector of $\Omega_{A/R}^{\top}$.

\subsection{Bounds}
\label{sec:bounds}
We record here a simple norm bound on the input–output map implemented by the circuit projector.
 For an input pattern applied on the designated input edges, the resulting output voltages satisfy
\begin{align}
    \norm{\Voutput} 
    & \leq 
    \norm{\Omega_{A/R}}
    \norm{\Sinput}
    \nonumber \\
    & \leq 
    \norm{R^{1 \over 2}}_2 \norm{\Omega_{M}}_2 \norm{R^{  - {1 \over 2}}}_2 \norm{\Sinput}_2
    \nonumber \\
    & \leq
    \norm{\Sinput}_2 \sqrt{R_{max} \over R_{min}}
\end{align}
where $R_{\max}$ and $R_{\min}$ denote the largest and smallest diagonal entries of $R$, respectively, and
$\Omega_M = R^{-{ 1 \over 2}} \Omega_{A/R} R^{{ 1 \over 2}}$ is a (symmetric) orthogonal projector operator.

\subsection{Two-Phase Gradient in Electrical Systems}
\label{app:twophaseelectric}

Here we show that the standard two-phase current-squared contrastive update converges, in the $\beta\to 0$ limit to the gradient of a distinct objective function.

Let $i(\mathbf s)$ denote the steady-state edge current vector for clamped source voltages $\mathbf s$. For the linear resistor networks considered here,
\begin{align}
    i(\mathbf s) = M \mathbf s,
    \qquad
    M := -A^{\top} (A R A^{\top})^{-1} A.
\end{align}
Let $\iF = i(\mathbf s)$ and $\iC = i(\mathbf s + \delta)$ denote the free and clamped currents, for a nudge
\begin{align}
    \delta = \beta P_o^{\top} (\bhy - \by),
\end{align}
where $P_o$ selects the output voltages, $\mathbf v$ is the network output, and $\mathbf y$ is the target. 

\begin{align}
    \lim_{\beta\to 0}\frac{1}{2\beta}\Big[\iC ^{\odot 2} - \iF^{\odot 2} \Big]
    &=
    \lim_{\beta\to 0}\frac{1}{2\beta}\left[ (M \sC)^{\odot 2} - (M \sF)^{\odot 2} \right]
    \nonumber
    \\ 
    &=
    \lim_{\beta\to 0}\frac{1}{2\beta}\left[ (M \sF + \beta  M P_o^{\top} (\bhy - \by))^{\odot 2} - (M \sF)^{\odot 2} \right]
    \nonumber
    \\ 
    &=
    \lim_{\beta\to 0}\frac{1}{2\beta}\left[ (M \sF)^{\odot 2} + 2 \beta (M \sF) \odot   (M P_o^{\top} (\bhy - \by)) 
    + \beta^2  (M P_o^{\top} (\bhy - \by))^{\odot 2} - (M \sF)^{\odot 2}
    \right]
    \nonumber
    \\ 
    &=
    \lim_{\beta\to 0} \left[\iF \odot   (M P_o^{\top} (\bhy - \by)) 
    + \beta  (M P_o^{\top} (\bhy - \by))^{\odot 2}
    \right]
    \nonumber
    \\
    &=
    \diag(\iF)  M P_o^{\top} (\bhy - \by).
    \label{eq:tp_eff_cost}
\end{align}
Thus, the standard two-phase contrastive update converges to the gradient in the $\beta\to 0$ limit.

\subsection{Energy Functions for Circuits}

In the system that we are discussing, the system parameters are the resistances $R$, the clamped inputs are the source voltages $\bs$, and the state is the resistor voltages $\bv$. In the notation of Equation \eqref{eq:nudged-energy},
\nobreak
\begin{align}
    \theta &= \{ R\}
    \nonumber
    \\
    \bz &= \bs
    \nonumber
    \\
    \by &= \bv
    \nonumber
\end{align}
The Lagrangian formulation discussion in the previous section then provides a natural energy function in terms of $\mathbf{v}$, $\bf s$, and $R$.
\begin{align}
E_R(\bv; \bs)
&= L(\bv, \mathbf{\lambda}^*; \bs) \nonumber
\\
&= \bv^\top R^{-1} \bv + \mathbf{\lambda}^{* \top} A (\bs + \bv) \nonumber
\\
&= \bv^\top R^{-1} \bv + 2 \bs ^{\top} A^{\top} (A R A^{\top})^{-1} A (\bs + \bv) \nonumber
\\
&=\bv^\top R^{-1} \bv - 2 \bs^\top M (\bs + \bv) 
\end{align}
Note that for a fixed input $\bf s$, 
\begin{align}
    \arg \min_{\mathbf{v}} E(\mathbf{s}, \mathbf{v})
    &= \arg \min_{\mathbf{v}} \mathcal{L}(\mathbf{v}, \mathbf{\lambda}^*; \mathbf{s})  \nonumber
    \\
    &= - \Omega_{A/R} \mathbf{s},
\end{align}
giving us an energy function in terms of the projector operator and input bias.

\subsection{Circuits as tunable mappings}
\label{app:circuit_tunable}
Each element of $\mathbf{s}$ corresponds exactly to a single voltage source on an edge of the circuit graph. We now activate $i$ of these voltage sources, and measure the resulting voltages across the resistors on $o$ other edges in the network. Having selected our ``forcing" (input) and ``measurement" (output) edges, it is always possible to reorder the edges in such a way that we may consider the following three groups of edges for some $\mathbf{s}\in \mathbb{R}^{\mrE}$,
\begin{equation}
    \mathbf{s} = \begin{bmatrix}
        \Sinput \\ \Soutput \\ \Shidden
    \end{bmatrix},
\end{equation}
where now by definition $\Sinput \in \mathbb{R}^{\mrEi}, \Soutput \in \mathbb{R}^{\mrEo}, \Shidden \in \mathbb{R}^{\mrE - \mrEi - \mrEo}$. This enforces the same ordering/grouping for some $\mathbf{v}$:
\begin{equation}
    \mathbf{v} = \begin{bmatrix}
        \Vinput \\ \Voutput \\ \Vhidden
    \end{bmatrix}.
\end{equation}
In this ordering, it is the case that when we apply nonzero bias on the inputs
\begingroup
\renewcommand\arraystretch{1.6}
\begin{equation}
    \sF = \left[\begin{array}{c}
        \Sinput \\ \hline
        \mathbf{0}^{(M-E_i) \times 1}
    \end{array}\right].
\end{equation}
\endgroup
We call this biasing pattern $\sF$ to indicate that it allows the system dynamics to freely determine the voltages at the ``output" edges. Applying this bias to the network creates the following set of voltages across the edge resistors:
\begin{equation}
    \mathbf{v}_F = -\Omega_{A/R}\sF.
\end{equation}
By measuring $\Voutput_F$, one can effectively compute a mapping $f_R(\Sinput_F) = \Voutput_F$. Here, we write $f_R(\cdot)$ to emphasize the fact that the action of $-\Omega_{A/R}$ is determined in part by tunable parameters, $R$: the set of on-edge resistances in the network. We thus have that the resistor network performs a \textit{voltage-mode} inference.

We would now like to tune $R$ such that we achieve a given mapping between $\Sinput_F$ and $\Voutput_F$. One approach would be to implement a two-phase learning method. 
Suppose we want the circuit, with resistances $R$, to realize a mapping
\begin{equation}
f_R(\bx)\approx \by,
\end{equation}
where $\bx$ is encoded as an input bias pattern and $\by$ denotes the target readout on the output edges. Two-phase
learning introduces a second steady-state experiment in which the outputs are \emph{weakly constrained} toward their
targets. Concretely, during the \emph{clamped} (or \emph{nudged}) phase we add small source perturbations on the output
edges whose signs and magnitudes are proportional to the output error. Intuitively, if an output edge voltage is too
large, we apply a small opposing bias to pull it down; if it is too small, we apply a small reinforcing bias to push
it up. This produces a nearby equilibrium that is slightly closer to the target, and the difference between the free
and clamped equilibria provides a local signal for updating the resistances.

Given a desired set of output edge voltages $\mathbf{y}$, we thus have two biasing patterns:
\begingroup
\renewcommand\arraystretch{1.6}
\begin{equation}
    \label{eq:source_defs}
    \sF = \left[\begin{array}{c}
        \mathbf{x} \\ \hline
        \mathbf{0}^{(M-s) \times 1}
    \end{array}\right]
    \qquad
    \sC = \left[\begin{array}{c}
        \mathbf{x} \\ \hline
        \beta(\Voutput_F - \mathbf{y}) \\ \hline
        \mathbf{0}^{(M-E_i-E_o) \times 1}
    \end{array}\right].
\end{equation}
\endgroup
We have now replaced $\Sinput$ with $\mathbf{x}$ and introduced the parameter $\beta$, which is a tunable scaling term controlling the magnitude of the error term. Its role is loosely analogous to the learning rate in standard gradient descent. Notice that $\sF$ and $\sC$ differ only on the biases at the output edges.

This viewpoint lets us ``read out'' a proposed physical training scheme by translating each experimental primitive into
an operator action. In particular, \emph{voltage-mode} biasing implements $\mathbf{s}\mapsto\mathbf{v}$, i.e.,
left-multiplication by $\Omega_{A/R}$, whereas many learning rules require the adjoint action
$\Omega_{A/R}^\top$ when propagating error signals or forming analytical gradients.

Operationally, $-\Omega_{A/R}^\top$
corresponds to the reciprocal mapping accessed by \emph{current-mode} manipulations: by driving the
network with an appropriate edge-current pattern and measuring voltages, one realizes the transpose operator needed by the projector-based estimator. Alternatively, $- \Omega_{A/R}^\top$ can be realized by fully voltage mode computations using the identity $\Omega_{A/R}^{\top} = R^{-1}\Omega_{A/R} R$. For any vector $\mathbf u \in \mathbb{R}^{\mrE}$, we can set the source voltages to be $\bs = \br \odot \mathbf u$, then measure the voltages $\bv = - \Omega_{A/R} \bs$. We then perform  a Hadamard multiplication by $\br^{-1}$, the element-wise reciprocal of $\br$, obtaining $- \Omega_{A/R}^\top \mathbf{u} = \br^{-1} \odot \bv$. Each Hadamard multiplication can be done completely locally, adhering to physical constraints. However, we note that the division required for the Hadamard multiplication by $\br^{-1}$ would likely be expensive to perform on hardware, and that the explicit Hadamard products involving $\br$ and $\br^{-1}$ would likely be less robust to both noise and nonidealities (than the alternative current-mode computation) when implemented in practice.

Finally, because $\Omega_{A/R}$ is an idempotent (oblique) projector, $\Omega_{A/R}^2=\Omega_{A/R}$, compositions of biasing/measurement steps simplify algebraically. This makes it possible to analyze complex multi-step physical protocols as
products of $\Omega_{A/R}$ and $\Omega_{A/R}^\top$,  providing a direct bridge
between circuit experiments and the learning rules developed in the main text.

\subsection{Input--Output Transformations}
\label{io_transformations}

A passive resistive network implements a linear map from edge sources to Ohmic drops,
\begin{equation}
  \mathbf{v} = -\,\Omega_{A/R}\,\mathbf{s},
  \qquad
  \Omega_{A/R} := R A^\top (A R A^\top)^{-1} A,
\end{equation}
where $A \in \mathbb{R}^{C \times \mrE}$ is a cycle matrix and $C := \mathrm{rank}(A)=E-N+1$ is the cycle-space dimension.
Selecting $\mrEi$ input edges and $\mrEo$ readout edges via column selectors
$P_i\in\{0,1\}^{\mrEi \times \mrE}$ and $P_o\in\{0,1\}^{\mrEo \times \mrE}$, we encode inputs
$\mathbf{x}\in\mathbb{R}^{\mrEi}$ as $\mathbf{s}=\gamma P_i^{\top} \mathbf{x}$ and read
\begin{equation}
  \hat{\mathbf{y}}
  := P_o \bv
  = -\gamma\,P_o \Omega_{A/R} P_i^{\top}\,\mathbf{x}
  \;=:\; W_R\,\mathbf{x}.
\end{equation}

\begin{proposition}[Rank bounds and cycle-space bottleneck]
\label{prop:rank_bounds}
The induced input--output map $W_R = -\gamma\,P_o \Omega_{A/R} P_i^\top$ satisfies
\begin{equation}
  \mathrm{rank}(W_R) \le \min\{\mrEi,\mrEo,C\}.
\end{equation}

Moreover,
\begin{equation}
  \mathrm{rank}(W_R)
  \le
  \min\Big\{\mathrm{rank}(A P_i^{\top}),\,\mathrm{rank}(A P_o)\Big\}.
  \label{eq:rank_cycle_bound}
\end{equation}
\end{proposition}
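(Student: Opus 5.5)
The plan is to factor $W_R$ through the $C$-dimensional cycle space and then apply the elementary inequality $\mathrm{rank}(XY)\le\min\{\mathrm{rank}X,\mathrm{rank}Y\}$. Write
\begin{equation*}
  W_R = -\gamma\,\big(P_o R A^\top\big)\,\big(A R A^\top\big)^{-1}\,\big(A P_i^\top\big),
\end{equation*}
which is a product of an $\mrEo\times C$, a $C\times C$ and a $C\times\mrEi$ matrix. First I will justify that the middle factor exists and is invertible. Since $A$ is a cycle matrix for the $C$ fundamental cycles, it has full row rank $C$, and $R\succ 0$. Hence $x^\top A R A^\top x=\|R^{1/2}A^\top x\|^2>0$ for $x\ne 0$, so $ARA^\top\succ 0$. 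This is the same invertibility already used in the derivation of $\Omega_{A/R}$ in Appendix~\ref{app:projectors}.

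The first bound is then immediate. The rank of a product is at most the inner dimension, giving $\mathrm{rank}(W_R)\le C$. It is also at most the number of rows $\mrEo$ and the number of columns $\mrEi$ of $W_R$; the nonzero scalar $-\gamma$ does not affect rank. Equivalently, I could use $\mathrm{rank}(\Omega_{A/R})=C$, which follows from idempotence (rank equals trace, and $\mathrm{tr}\,\Omega_{A/R}=\mathrm{tr}\big((ARA^\top)^{-1}ARA^\top\big)=C$), together with the fact that $P_o,P_i$ are selectors.

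For \eqref{eq:rank_cycle_bound}, the right factor gives $\mathrm{rank}(W_R)\le\mathrm{rank}(AP_i^\top)$ directly. For the left factor I note that $P_oRA^\top=(ARP_o^\top)^\top$. Because $R$ is diagonal and $P_o$ is a selector, $RP_o^\top=P_o^\top R_o$, where $R_o=P_oRP_o^\top$ is the diagonal positive matrix of output-edge resistances. Since $R_o$ is invertible, $\mathrm{rank}(P_oRA^\top)=\mathrm{rank}(AP_o^\top R_o)=\mathrm{rank}(AP_o^\top)$.

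The main obstacle is purely notational. With $P_o\in\{0,1\}^{\mrEo\times\mrE}$ as in this appendix, the expression $AP_o$ in \eqref{eq:rank_cycle_bound} must be read as $AP_o^\top$ (the cycle matrix restricted to the output-edge columns), consistent with $AP_i^\top$. I will state this reading explicitly. I will also point out that the diagonality of $R$ is exactly what makes the resistance weighting drop out of the left-factor rank.
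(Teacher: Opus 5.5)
Your proposal is correct and follows the paper's own argument: the same factorization $W_R=-\gamma\,(P_oRA^\top)(ARA^\top)^{-1}(AP_i^\top)$ through the $C$-dimensional cycle space, combined with the product-rank inequality. For the first bound the paper uses $\mathrm{rank}(\Omega_{A/R})=C$, which you also give as an alternative. In the second bound, the paper's ``by transposing'' step silently treats the left factor as $AP_o^\top$, when it is really $ARP_o^\top$. Your explicit step $RP_o^\top=P_o^\top R_o$ with $R_o$ invertible fills that gap, and your reading of $AP_o$ as $AP_o^\top$ is the right one.
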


\begin{proof}
The scalar factor $-\gamma$ does not affect rank. For the first bound,
\begin{equation}
\mathrm{rank}(W_R)=\mathrm{rank}(P_o \Omega_{A/R} P_i^{\top})
\le \min\{\mathrm{rank}(P_o),\mathrm{rank}(\Omega_{A/R}),\mathrm{rank}(P_i^{\top})\}.
\end{equation}
Since $\mathrm{rank}(P_o)=\mrEo$, $\mathrm{rank}(P_i)=\mrEi$, and
\begin{equation}
\Omega_{A/R} = R A^\top (A R A^\top)^{-1} A
\quad\Rightarrow\quad
\mathrm{rank}(\Omega_{A/R})=\mathrm{rank}(A)=C
\end{equation}
(because $R$ is invertible and $(A R A^\top)$ is invertible on $\mathrm{im}(A)$), we obtain
$\mathrm{rank}(W_R)\le \min\{\mrEi,\mrEo,C\}$.

For the second bound, use the factorization
\begin{equation}
W_R = -\gamma\,(P_o R A^\top)\,(A R A^\top)^{-1}\,(A P_i^\top).
\end{equation}
Hence
\begin{equation}
\mathrm{rank}(W_R)\le \mathrm{rank}(A P_i).
\end{equation}
Similarly, by transposing and using $\mathrm{rank}(M)=\mathrm{rank}(M^\top)$,
\begin{equation}
\mathrm{rank}(W_R)=\mathrm{rank}(W_R^\top)
\le \mathrm{rank}\big((A P_o)^\top\big)=\mathrm{rank}(A P_o).
\end{equation}
Combining yields \eqref{eq:rank_cycle_bound}.
\end{proof}

\begin{proposition}[A sufficient condition for saturating the input-cycle rank]
\label{prop:distinct_cycles}
Fix a fundamental cycle basis so that (after reordering edges) the cycle matrix has the form
\begin{equation}
  A = \begin{bmatrix} I_C \;\; A' \end{bmatrix},
\end{equation}
where $I_C$ is the $C\times C$ identity block corresponding to one designated edge per fundamental cycle.
If the chosen input edges include $q$ distinct edges from this identity block, then
\begin{equation}
  \mathrm{rank}(A P_i)\;\ge\; q,
\end{equation}
and in particular if $q=\min\{C,s\}$ (i.e.\ the $s$ input edges lie on $s$ distinct fundamental cycles, up to the
cycle-space limit), then
\begin{equation}
  \mathrm{rank}(A P_i)=\min\{C,s\}.
\end{equation}
An analogous statement holds for $A P_o$.
\end{proposition}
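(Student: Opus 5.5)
The plan is to reduce everything to a column-selection statement about $A$. Whichever selector convention is used ($P_i\in\{0,1\}^{\mrE\times s}$ as in the main text, or $P_i^\top$ as in Proposition~\ref{prop:rank_bounds}), the matrix $A P_i$ (resp.\ $A P_i^\top$) is the $C\times s$ submatrix of $A$ whose columns are the columns of $A$ indexed by the input edges. I will fix this interpretation first. I will also note that reordering edges permutes the columns of $A$ and the rows of $P_i$ simultaneously, so $A P_i$ is unchanged and the rank is unaffected. This justifies working in the ordering where $A=[\,I_C\;\;A'\,]$.

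Next I will justify the block form. In a fundamental cycle basis built from a spanning tree, each fundamental cycle contains exactly one chord (non-tree edge), and that chord lies in no other fundamental cycle. Hence the chord columns of $A$ are $\pm$ standard basis vectors of $\mathbb{R}^C$. If the chord is oriented along its cycle, or if we simply note that sign flips of columns do not change rank, these columns form the block $I_C$. This is the only point where graph structure enters, and I expect it to be the main (mild) obstacle, mostly a matter of keeping conventions straight.

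The rank statements are then linear algebra. Suppose $q$ of the chosen input edges are distinct edges of the identity block. The corresponding columns of $A P_i$ are $q$ distinct standard basis vectors $e_{j_1},\dots,e_{j_q}$. These are linearly independent, so $\mathrm{rank}(A P_i)\ge q$. For the upper bound, $A P_i$ is $C\times s$, so $\mathrm{rank}(A P_i)\le \min\{C,s\}$. When $q=\min\{C,s\}$ the two bounds coincide and give equality.

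The statement for $A P_o$ follows by the identical argument, with output edges in place of input edges. Combining the result with \eqref{eq:rank_cycle_bound} of Proposition~\ref{prop:rank_bounds} is then immediate, although that is not required for the statement itself.
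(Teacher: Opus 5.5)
Your proposal is correct and follows essentially the same argument as the paper: the $q$ identity-block columns of $A P_i$ are distinct standard basis vectors, which gives $\mathrm{rank}(A P_i)\ge q$, and the upper bound $\min\{C,s\}$ then forces equality when $q=\min\{C,s\}$ (the paper derives this bound from $\mathrm{rank}(A)=C$ and $\mathrm{rank}(P_i)=s$, you from the $C\times s$ shape). Your spanning-tree justification of the block form $[\,I_C\;\;A'\,]$ is harmless but not needed, since the proposition takes that form as a hypothesis.
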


\begin{proof}
Let the $q$ selected input edges that belong to the identity block correspond to column indices
$j_1,\dots,j_q$ within the first $C$ columns. Then the corresponding columns of $A$ are exactly the standard basis
vectors $e_{j_1},\dots,e_{j_q}\in\mathbb{R}^C$, which are linearly independent. Since $A P_i$ contains these columns,
$\mathrm{rank}(A P_i)\ge q$.

Also, $\mathrm{rank}(A P_i)\le \min\{\mathrm{rank}(A),\mathrm{rank}(P_i)\}=\min\{C,s\}$. If $q=\min\{C,s\}$, the lower
and upper bounds coincide, giving $\mathrm{rank}(A P_i)=\min\{C,s\}$. The output case is identical with $P_o$.
\end{proof}

\subsection{Existence of a Minimum}
\label{min_exists}

\begin{theorem}[Existence of a global minimizer]
Let $\Theta$ be compact and let $f:\Theta\to\mathbb{R}^{\mrEo}$ be continuous. For a fixed target
$\mathbf{y}_T\in\mathbb{R}^{\mrEo}$, define
\begin{equation}
  \mathcal{L}(\theta) := \frac12\|f(\theta)-\mathbf{y}_T\|^2.
\end{equation}
Then $\mathcal{L}$ attains a global minimum on $\Theta$. Moreover,
\begin{equation}
  \min_{\theta\in\Theta}\mathcal{L}(\theta)
  = \frac12 \min_{\mathbf{y}\in f(\Theta)} \|\mathbf{y}-\mathbf{y}_T\|^2,
\end{equation}
and $\min_{\theta\in\Theta}\mathcal{L}(\theta)=0$ iff $\mathbf{y}_T\in f(\Theta)$.
\end{theorem}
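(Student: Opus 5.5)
The plan is to apply the extreme value theorem (Weierstrass) and then rewrite the minimum in terms of the image set $f(\Theta)$.

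\emph{Step 1 (attainment).} Write $\mathcal{L} = \phi\circ f$ with $\phi(\mathbf{y}) := \frac12\|\mathbf{y}-\mathbf{y}_T\|^2$. The map $\phi$ is continuous on $\mathbb{R}^{\mrEo}$, so $\mathcal{L}$ is a composition of continuous maps and hence continuous on $\Theta$. A continuous real-valued function on a nonempty compact set attains its infimum, so there is $\theta^*\in\Theta$ with $\mathcal{L}(\theta^*)=\inf_\Theta\mathcal{L}$. Nonemptiness of $\Theta$ is implicit; in the circuit application $\Theta=\prod_e[r_{\min},r_{\max}]$ with $r_{\min}>0$. That box is compact, and $f(\br)=-\gamma P_o\Omega_{A/R}P_i^\top\bx$ is continuous there because $ARA^\top$ is invertible for $R\succ0$, so the theorem covers the clipped training of Section~\ref{sec:rec}.

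\emph{Step 2 (reformulation).} The set $f(\Theta)$ is the continuous image of a compact set, so it is compact and nonempty, and $\phi$ attains its minimum on it. To identify the two minima, I will show two inequalities.
\begin{itemize}
\item For every $\theta$ we have $f(\theta)\in f(\Theta)$, hence $\mathcal{L}(\theta)\ge \min_{f(\Theta)}\phi$.
\item Conversely, if $\mathbf{y}^*\in f(\Theta)$ is a minimizer of $\phi$, pick any $\theta'$ with $f(\theta')=\mathbf{y}^*$. Then $\min_\Theta\mathcal{L}\le\mathcal{L}(\theta')=\phi(\mathbf{y}^*)$.
\end{itemize}
Together these give equality.

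\emph{Step 3 (zero-loss characterization).} Suppose $\mathbf{y}_T=f(\theta')$ for some $\theta'$. Then $\mathcal{L}(\theta')=0$, and since $\mathcal{L}\ge0$ the minimum is $0$. Conversely, suppose the minimum is $0$. By Step 1 it is attained at some $\theta^*$, so $\|f(\theta^*)-\mathbf{y}_T\|=0$ and therefore $\mathbf{y}_T=f(\theta^*)\in f(\Theta)$. An alternative route is that a compact set is closed, so $\operatorname{dist}(\mathbf{y}_T,f(\Theta))=0$ forces membership.

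There is no real obstacle here. The only points needing care are that attainment (compactness) is what makes the ``only if'' direction work, and that the continuity of $f$ must be checked, which reduces to the invertibility of $ARA^\top$ on the compact resistance box.
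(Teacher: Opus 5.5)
Your proposal is correct and follows the same route as the paper. Both use the extreme value theorem for attainment, identify the minimum through the image set $f(\Theta)$, and rely on attainment for the zero-loss characterization. You are somewhat more careful than the paper: you note that $f(\Theta)$ is compact so the right-hand minimum exists, prove the identity by two inequalities, and make explicit that attainment is what gives the ``only if'' direction. The argument is otherwise the same.
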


\begin{proof}
Since $f$ is continuous and $\mathbf{y}\mapsto \frac12\|\mathbf{y}-\mathbf{y}_T\|^2$ is continuous, their composition
$\mathcal{L}$ is continuous on $\Theta$. By compactness of $\Theta$, the extreme value theorem implies $\mathcal{L}$
attains a minimum.

The identity
\begin{equation}
\min_{\theta\in\Theta}\mathcal{L}(\theta)
= \frac12 \min_{\mathbf{y}\in f(\Theta)} \|\mathbf{y}-\mathbf{y}_T\|^2
\end{equation}
holds because the set of achievable outputs is exactly $f(\Theta)$.

Finally, $\mathcal{L}(\theta)=0$ iff $f(\theta)=\mathbf{y}_T$, which is possible iff $\mathbf{y}_T\in f(\Theta)$.
\end{proof}

\subsection{Random nanowire deposition model}
\label{app:random-nw-deposition}

To generate a more realistic connectivity graph than an abstract random-graph ensemble, we model a
\emph{random deposition} of nanowires on a planar substrate and then convert geometric overlaps into
edges of an equivalent graph. This is similar to the approaches taken in \cite{zhu2021information} and \cite{barrows2025}.

We deposit $n$ nanowires on a 2D surface sequentially and at random as follows. Each nanowire $N_i$ is represented as a straight line segment
of fixed length $l$, with a randomly chosen in-plane orientation $\theta_i \sim \mathrm{Unif}(0,2\pi)$ and a
randomly chosen center position. The segment is fully specified by its endpoints
$\bigl(x_i^{(1)},y_i^{(1)}\bigr)$ and $\bigl(x_i^{(2)},y_i^{(2)}\bigr)$, or equivalently by the direction vector
\begin{equation}
\mathbf{d}_i = \bigl(x_i^{(2)}-x_i^{(1)},\,y_i^{(2)}-y_i^{(1)}\bigr).
\end{equation}

For every pair of nanowires $(i,j)$ we test whether the corresponding segments intersect by using a
parametric representation:
\begin{align}
(x_i(t_i),y_i(t_i)) &= (x_i^{(1)},y_i^{(1)}) + t_i\Bigl[(x_i^{(2)},y_i^{(2)})-(x_i^{(1)},y_i^{(1)})\Bigr],
\qquad t_i\in[0,1],\\
(x_j(t_j),y_j(t_j)) &= (x_j^{(1)},y_j^{(1)}) + t_j\Bigl[(x_j^{(2)},y_j^{(2)})-(x_j^{(1)},y_j^{(1)})\Bigr],
\qquad t_j\in[0,1].
\end{align}
An intersection exists if there are parameters $(t_i,t_j)\in[0,1]^2$ such that
$(x_i(t_i),y_i(t_i))=(x_j(t_j),y_j(t_j))$.

We then construct an unweighted graph intermediate $\mathcal{G}=(\mathcal{N}, \mathcal{E})$ where each nanowire is a node ($\mathcal{N}=\{1,\dots,n\}$), and an edge $e_{ij}\in \mathcal{E}$ is present iff nanowire $i$ and nanowire $j$ intersect. Next, we calculate a minimal spanning tree of $\mathcal{G}$, then randomly select input and output edges from the largest connected component of the complement of that minimal spanning tree. This ensures that our input and output edges are well-connected to one another and to the overall graph structure while allowing us to effectively sample from a realistic distribution of graph topologies.

\subsection{Stochastic gradient}
\label{ref:StochGradient}
We quantify how additive observation noise in the targets affects the two-phase current-squared update and the analytical (projector) gradient in our linear resistor network. As is common in statistical learning theory, we assume that we observe output $\by$ for input $\x$, where
\begin{align}
    \by = f(\x) + \varepsilon, 
\end{align}
with $\varepsilon$ a zero-mean noise term, $\mathbb{E}[\varepsilon] = 0$. 
For our circuit, with input $\x$, the source voltages are $\sF = P_i^{\top} \x$. The resulting resistor voltages are then $- \gamma \Omega_{A/R} \sF = - \gamma \Omega_{A/R} P_i^{\top} \x$. The resulting outputs are thus $\bhy(\x) = - \gamma \,P_o^{\top} \Omega_{A/R} P_i^{\top} \x$.

Defining the loss function as
\begin{align}
    \mathcal{L}(\x, \by) &= \|\bhy(\x) - \by\|^2
    \nonumber \\
    &= \|- \gamma P_o^T \Omega_{A/R} P_i^{\top} \x - \by\|^2 ,
\end{align}
we can write the analytical gradient with respect to the resistances as 
\begin{align}
    \partial_{r} \mathcal{L}(v, v_T) &= \frac{\partial\mathcal{L}}{\partial\bhv}\frac{\partial\bhv}{\partial r} 
    \nonumber \\
    &= \diag(\mathbf{i}) (I - \Omega_{A/R}^{T}) (v - v_T) ,
    \label{eq:ProjectorAnalyticalGradient}
\end{align}
where $\mathbf{i} = R^{-1} v$ and $v_T$ is the target voltage pattern embedded on the output edges. 
For our linear circuit, $v(\x) = - \gamma \Omega_{A/R} P_i^{\top} \x$ and $v_T(\x) = P_o^{\top} \by$, so
\begin{align}
    \partial_{r} \mathcal{L}(\x, \by)
    &= \diag(- \gamma R^{-1} \Omega_{A/R} P_i^T \x) (I - \Omega_{A/R}^{T})  \left( - \gamma \Omega_{A/R} P_i^T \x - P_o^T \by \right).
    \label{eq:ProjectorAnalyticalGradient_obs}
\end{align}
We now define $C = - \gamma \Omega_{A/R} P_i^{T}$. 
The two-phase gradient calculated with the true (noiseless) data $f(\x)$ is given by $T(\x, \beta)$:
\begin{align}
    T(\x, \beta) &=
    -\frac{1}{2 \beta}\Big[\iF^{\odot 2} - \iC ^{\odot 2} \Big]
    \nonumber \\
    &=
    -\frac{1}{2 \beta}\Big[\iF^{\odot 2} - (\iF + \beta Q (\bhy(\x) - f(\x ) ) ^{\odot 2} \Big] .
\end{align}
for some circuit-dependent matrix $Q$ that captures the linear response of the currents to an output-level nudging signal.

The two-phase gradient of the observed (noisy) data $\by$ is given by $\hat{T}(\x, \by, \beta)$:
\begin{align}
    \hat{T}(\x, \by, \beta) &=
    -\frac{1}{2 \beta}\Big[\iF^{\odot 2} - \iC ^{\odot 2} \Big]
    \nonumber \\
    &=
    -\frac{1}{2 \beta}\Big[\iF^{\odot 2} - (\iF + \beta Q (\bhy(\x) - \by) ^{\odot 2} \Big]
    \nonumber \\
    &=
    -\frac{1}{2 \beta}\Big[\iF^{\odot 2} - (\iF + \beta Q (\bhy(\x) - f(\x ) - \varepsilon) ^{\odot 2} \Big]
    \nonumber \\
    &=
    -\frac{1}{2 \beta}\Big[\iF^{\odot 2} - \left (\iF + \beta Q (\bhy(\x) - f(\x )) - \beta Q \varepsilon \right) ^{\odot 2} \Big]
    \nonumber \\
    &=
    -\frac{1}{2 \beta}\Big[\iF^{\odot 2} - \left ((\iF + \beta Q (\bhy(\x) - f(\x )))^{\odot 2} - 2 (\iF + \beta Q (\bhy(\x) - f(\x ))) \odot (\beta Q \varepsilon) + \beta^2(Q \varepsilon)^{\odot 2} \right)\Big] .
\end{align}
Then it follows that the expectation form is
\begin{align}
    \E[\hat{T}(\x, \beta)]
    &=
    \E\left [ -\frac{1}{2 \beta}\Big[\iF^{\odot 2} - \left ((\iF + \beta Q (\bhy(\x) - f(\x )))^{\odot 2} - 2 (\iF + \beta Q (\bhy(\x) - f(\x ))) \odot (\beta Q \varepsilon) + \beta^2(Q \varepsilon)^{\odot 2} \right)\Big] \right]
    \nonumber \\
    &=
    \E\left [ -\frac{1}{2 \beta}\Big[\iF^{\odot 2} - (\iF + \beta Q (\bhy(\x) - f(\x )))^{\odot 2}\right] + \frac{1}{2\beta}\left(2 \E \left [ (\iF + \beta Q (\bhy(\x) - f(\x ))) \odot (\beta Q \varepsilon) \right ] - \E \left [\beta^2(Q \varepsilon)^{\odot 2} \right ]\right)
    \nonumber \\
    &=
    \E\left [ T(\x, \beta)\right] + \frac{1}{2\beta}\left(2 (\iF + \beta Q (\bhy(\x) - f(\x ))) \odot (\beta Q \E \left [ \varepsilon\right ]) - \E \left[\beta^2(Q \varepsilon)^{\odot 2} \right]\right)
    \nonumber \\
    &=
    T(\x, \beta) - \frac{\beta}{2} \E \left [(Q \varepsilon)^{\odot 2} \right ].
\end{align}

The presence of the second term implies that $\hat{T}(\x, \by, \beta)$ is not an unbiased estimator of $T(\x, \beta)$. While we are generally focused on linear circuits in this manuscript, we note that this proof applies to both linear and nonlinear circuits. In other words, a gradient estimate of the form $-\frac{1}{2 \beta}\Big[\iF^{\odot 2} - \iC ^{\odot 2} \Big] \nonumber$ is guaranteed to be statistically biased, in addition to the well-known estimation error introduced by non-zero nudging.

Returning to the case of fully linear circuits, we now focus on the form of the analytical ($\Omega$) gradient for the true data,
\begin{align}
    A(\x) &= \diag(\mathbf{i}) (I - \Omega_{A/R}^{T}) (v - P_o^{\top} f(\x) )
    \nonumber \\
    &= \diag(R^{-1} C \x) (I - \Omega_{A/R}^{T}) (C \x - P_o^{\top} f(\x) ) ,
    \label{eq:ProjectorAnalyticalGradient2}
\end{align}
and the observed analytical gradient,
\begin{align}
    \E\left [\hat{A}(\x) \right] 
    &= \E \left [\diag(R^{-1} C \x) (I - \Omega_{A/R}^{T}) (C \x - P_o^{\top} g(\x) ) \right ]
    \nonumber \\
    &= \diag(R^{-1} C \x) (I - \Omega_{A/R}^{T}) \E \left [C \x - P_o^{\top} f(\x) - P_o^{\top} \varepsilon \right ]
    \nonumber \\
    &= \diag(R^{-1} C \x) (I - \Omega_{A/R}^{T}) (C \x - P_o^{\top} f(\x) - P_o^{\top} \E \left [\varepsilon \right ] )
    \label{eq:ProjectorAnalyticalGradient_obs2}
    \nonumber \\
    &=
    \diag(R^{-1} C \x) (I - \Omega_{A/R}^{T}) (C \x - P_o^{\top} f(\x) ) 
    \nonumber \\
    &=
    A(\x) .
\end{align}
Thus, the observed analytical gradient $\hat{A}$ is an unbiased estimator of the true analytical gradient $A$. 
Note that this means that there are two sources of error when using the two-phase gradient in practice:
\begin{enumerate}
    \item The deterministic error due to the fact that the true two-phase gradient $T$ is only an $O(\beta)$ approximation of \eqref{eq:tp_eff_cost}.
    \item The stochastic error due to the fact that the empirical two-phase gradient $\hat{T}$ is a biased estimator of the true two-phase gradient $T$ for any non-zero nudge.
\end{enumerate}
 In contrast, the analytical gradient does not incur the finite-$\beta$ approximation error of the two-phase estimator and remains unbiased under additive zero-mean observation noise. In particular, assuming $\E[\varepsilon\mid \x]=0$, the observed analytical gradient satisfies
\begin{align}
\E \left[\hat{A}(\x)\mid \x\right]=A(\x).
\end{align}
However, $\hat{A}(\x)$ still has variance from observation noise (and from finite-sample minibatching over $\x$). The noise-induced error is explicit:
\begin{align}
\hat{A}(\x)-A(\x)=\diag(R^{-1}C\x)\,(I-\Omega_{A/R}^{\top})\,P_o^{\top}\varepsilon,
\end{align}
so that its conditional covariance (given $\x$) is
\begin{align}
\mathrm{Cov}\!\left[\hat{A}(\x)\mid \x\right]
=\diag(R^{-1}C\x)\,(I-\Omega_{A/R}^{\top})\,P_o^{\top}\Sigma_\varepsilon P_o\,(I-\Omega_{A/R})\,\diag(R^{-1}C\x),
\end{align}
where $\Sigma_\varepsilon:=\mathrm{Cov}(\varepsilon\mid \x)$. This bounds the mean-square-error of the analytical gradient, 
\begin{align}
\E\!\left[\|\hat{A}(\x)-A(\x)\|_2^2\mid \x\right]
\le \|P_o\,(I-\Omega_{A/R})\,\diag(R^{-1}C\x)\|_2^2 \;\E\!\left[\|\varepsilon\|_2^2\mid \x\right] .
\end{align}
Thus, unlike the two-phase estimator which incurs both finite-$\beta$ approximation error and an $O(\beta)$ noise-induced bias, the analytical gradient remains unbiased under $\E[\varepsilon\mid\x]=0$, with remaining uncertainty arising only from variance due to observation noise and finite-sample averaging.

When testing these theoretical results experimentally, we sample regression inputs $\bx \sim \mathcal{N}(0,1)$, and noiseless and noisy outputs $\by, \by_{\varepsilon}\in\mathbb{R}^2$ respectively. We use the generation procedure $\by=M\bx$, $\by_{\varepsilon}=M\bx+\varepsilon$, with $M\in\mathbb{R}^{2\times2}$
sampled uniformly entrywise from the interval $[0, 10)$ and $\varepsilon\sim\mathcal{N}(0,9)$. Inputs are encoded as edge sources and
outputs read out on selected edges, producing $\bhy(\bx)$.

In order to create Fig. \ref{fig:nanowire_results}, we generated $160$ nanowire networks via the procedure described in Appendix \ref{app:random-nw-deposition}. Training was performed with both vanilla two-phase and $\Omega$ learning. Vanilla two-phase learning was performed with nudge $\beta = 0.3$ and a learning rate of $1$. Similarly, $\Omega$ learning used a learning rate of $0.3$ to match the scale of the gradient for vanilla two-phase learning (which has a magnitude roughly proportional to the nudge). To ensure that we are only using those nanowire networks with trainable topologies, we considered only the results from the top $25\%$ of networks (40 total networks) with the best performance early in training (as measured by the average of the losses of $\Omega$ learning and vanilla two-phase learning at Epoch 20).

We report loss curves, resistance evolution, and the Frobenius distance between
the learned and target input--output maps (Fig. ~\ref{fig:nanowire_results}). These appear to confirm our theoretical results: namely, while both vanilla two-phase learning and $\Omega$ learning perform well in the noiseless setting, $\Omega$ learning has a significant advantage in the noisy setting due to the statistical bias in the gradient estimates of vanilla two-phase learning.

\subsection{Hinge-loss classification in circuits}
\label{app:classification_details}

\subsubsection{Classifier and hinge loss}

We use a single readout edge with basis vector $\mathbf{e}_o\in\mathbb{R}^M$. For an input $\bx$, the circuit produces
$\bv(\bx)$ and scalar score $f(\bx)=\mathbf{e}_o^\top\bv(\bx)$. With labels $y\in\{-1,1\}$, the hinge loss is
\begin{equation}
L_h(\bx,y) = \max(0,\,1-y f(\bx)).
\end{equation}
Equivalently, define a target voltage vector supported on the output edge,
\begin{equation}
\mathbf{v}_T := y^{\top}\,\mathbf{e}_o,
\end{equation}
so that the margin is $\mathbf{v}_T^\top \bv = y\,\mathbf{e}_o^\top\bv$ and
\begin{equation}
\mathcal{L}_h(\bv,\mathbf{v}_T)=\max(0,\,1-\mathbf{v}_T^\top \bv).
\end{equation}
Only margin-violating examples (those with $1-\mathbf{v}_T^\top\bv>0$) contribute a nonzero update.

\subsubsection{Subgradient with respect to resistances}

Let $\br$ be the vector of edge resistances. For margin-violating samples, a subgradient is
\begin{equation}
\frac{\partial \mathcal{L}_h}{\partial \br}
= -\,\frac{\partial (\mathbf{v}_T^\top \bv)}{\partial \br}
= -\,\mathbf{v}_T^\top \frac{\partial \bv}{\partial \br}.
\end{equation}
Using the circuit Jacobian identity
\begin{equation}
\frac{\partial \bv}{\partial \br} = (I-\Omega_{A/R})\,\diag(\bi),
\label{eq:dvdr_app}
\end{equation}
(where $\bi$ is the steady-state edge current vector), we obtain for violating samples
\begin{equation}
\frac{\partial \mathcal{L}_h}{\partial \br}
= -\,\mathbf{v}_T^\top (I-\Omega_{A/R})\,\diag(\bi),
\end{equation}
and $\partial \mathcal{L}_h/\partial \br = 0$ otherwise. This expresses the classification update entirely in terms of
steady-state observables and $\Omega_{A/R}$.

\subsubsection{How the two estimators implement this update}

\paragraph{Two-phase (contrastive) estimator.}
The two-phase scheme forms an update from free/clamped current measurements:
\begin{equation}
\Delta\br \ \propto\ \bi_F^{\odot 2}-\bi_C^{\odot 2},
\end{equation}
where the clamped phase is obtained by applying a small output-edge perturbation. For hinge loss with a single output
edge, we gate the nudge by the margin condition:
\begin{equation}
\epsilon =
\begin{cases}
-\beta\,y\,\mathbf{e}_o, & \text{if } 1-y\,\mathbf{e}_o^\top\bv > 0,\\
\mathbf{0}, & \text{otherwise,}
\end{cases}
\end{equation}
and implement the clamped experiment using this $\epsilon$ as the output-edge bias. The resulting contrastive update
approximates the subgradient above, with accuracy controlled by the nudge magnitude.

\paragraph{Projector-based (analytical) estimator.}
The projector-based scheme first applies an output-supported error vector $\epsilon$ (as above) and computes
\begin{equation}
\Delta := (I-\Omega_{A/R}^\top)\epsilon,
\end{equation}
where the action of $\Omega_{A/R}^\top$ is realized by a reciprocal/current-mode experiment. The resistance update is
then local:
\begin{equation}
\Delta\br \ \propto\ \bi_F \odot \Delta.
\end{equation}
This mirrors the analytical structure in~\eqref{eq:dvdr_app} and removes finite-nudge bias at the estimator level.

\subsection{Landscape visualization}
\label{app:landscape_details}
To visualize training trajectories, we follow the loss-landscape procedure of Li et al.~\cite{Li2017}. Concretely, we collect the resistance trajectory $\{\br_t\}_{t=0}^T$ during training. We then compute the offsets $\{\Delta\br_i\}_{t=0}^T$ for each timestep relative to the initial resistance state, namely
\begin{equation}
    \Delta \br_{i} = \br_i - \br_0
\end{equation}
and then find the two principal directions of variation in these offsets, which we denote ($\bm{\delta}_1$, $\bm{\delta}_2$). It is then possible to deterministically sample the two-dimensional loss landscape $S$ ``around" the starting resistance configuration $\br_0$ at some coordinate $(q_1, q_2)$ by evaluating
\begin{align}
    \br' &= \br_0 + q_1\bm{\delta}_1 + q_2\bm{\delta}_2 \\
    S(q_1, q_2) &= \overline{\mathcal{L}}(-\Omega_{A/R}(\br'))
\end{align}
where $\overline{\mathcal{L}}(\cdot)$ is the \textit{full-batch} loss and $-\Omega_{A/R}(\cdot)$ is the projector of the circuit with a given vector of edge resistances. Generally we restrict $(q_1,q_2)$ to lie in $[-1.5, 1.5]^2$. 

To plot an actual trajectory of weights on the landscape, we instead compute the inverse transform, finding the transformed $(q_1, q_2)$ coordinates of every stored resistance configuration in the trajectory.



\end{document}